\documentclass{article}

\usepackage{microtype}
\usepackage{graphicx}
\usepackage{subcaption}
\usepackage{booktabs} 

\usepackage{hyperref}
\usepackage{multirow}
\usepackage{pifont}
\usepackage{tcolorbox}
\usepackage[table]{xcolor}
\usepackage{url}
\tcbuselibrary{theorems}

\newtcbtheorem[
  number within=section
]{definitionbox}{Definition}{
  colback=gray!6,
  colframe=gray!40,
  fonttitle=\bfseries,
  boxrule=0.4pt,
  arc=2pt
}{def}

\newtcbtheorem[
  number within=section
]{theorembox}{Theorem}{
  colback=gray!6,
  colframe=gray!40,
  fonttitle=\bfseries,
  boxrule=0.4pt,
  arc=2pt
}{def}

\usepackage[preprint]{icml2026}

\usepackage{amsmath}
\usepackage{amssymb}
\usepackage{mathtools}
\usepackage{amsthm}
\usepackage{xspace}

\usepackage[capitalize,noabbrev]{cleveref}

\theoremstyle{plain}
\newtheorem{theorem}{Theorem}[section]
\newtheorem{proposition}[theorem]{Proposition}
\newtheorem{lemma}[theorem]{Lemma}

\theoremstyle{definition}

\newtheorem{assumption}[theorem]{Assumption}
\theoremstyle{remark}

\newcommand{\we}{STEP\xspace}

\usepackage[textsize=tiny]{todonotes}

\begin{document}

\twocolumn[
  \icmltitle{\we: Warm-Started Visuomotor Policies with Spatiotemporal Consistency Prediction}



  \icmlsetsymbol{equal}{*}

  \begin{icmlauthorlist}
    \icmlauthor{Jinhao Li}{sjtu,sii,equal}
    \icmlauthor{Yuxuan Cong}{sjtu,equal}
    \icmlauthor{Yingqiao Wang}{sjtu}
    \icmlauthor{Hao Xia}{sjtu}
    \icmlauthor{Shan Huang}{sjtu,sii}
    \icmlauthor{Yijia Zhang}{sjtu}
    \icmlauthor{Ningyi Xu}{sjtu}
    \icmlauthor{Guohao Dai}{sjtu,sii,infi}
  \end{icmlauthorlist}

  \icmlaffiliation{sjtu}{Shanghai Jiao Tong University, Shanghai, China.}
  \icmlaffiliation{sii}{Shanghai Innovation Institute, Shanghai, China.}
  \icmlaffiliation{infi}{Infinigence-AI, Shanghai, China. $^*$Equal contributions}

  \icmlcorrespondingauthor{Guohao Dai}{daiguohao@sjtu.edu.cn}


  \vskip 0.3in
]



\printAffiliationsAndNotice{}  

\begin{abstract}

Diffusion policies have recently emerged as a powerful paradigm for visuomotor control in robotic manipulation due to their ability to model the distribution of action sequences and capture multimodality. However, iterative denoising leads to substantial inference latency, limiting control frequency in real-time closed-loop systems. Existing acceleration methods either reduce sampling steps, bypass diffusion through direct prediction, or reuse past actions, but often struggle to jointly preserve action quality and achieve consistently low latency.
In this work, we propose \textbf{\we}, a lightweight spatiotemporal consistency prediction mechanism to construct high-quality warm-start actions that are both distributionally close to the target action and temporally consistent, without compromising the generative capability of the original diffusion policy. 
Then, we propose a velocity-aware perturbation injection mechanism that adaptively modulates actuation excitation based on temporal action variation to prevent execution stall especially for real-world tasks. 
We further provide a theoretical analysis showing that the proposed prediction induces a locally contractive mapping, ensuring convergence of action errors during diffusion refinement.
We conduct extensive evaluations on nine simulated benchmarks and two real-world tasks.
Notably, \we with 2 steps can achieve an average 21.6\% and 27.5\% higher success rate than BRIDGER and DDIM on the RoboMimic benchmark and real-world tasks, respectively.
These results demonstrate that \we consistently advances the Pareto frontier of inference latency and success rate over existing methods.
The code is publicly available at \url{https://github.com/Kimho666/STEP}.

\end{abstract}    
\section{Introduction}


Diffusion policy has been introduced as a new paradigm for visuomotor control in robotic manipulation recently~\cite{chi2025diffusion,ma2024hierarchical,ze20243d,liu2025diffusion,wolf2025diffusion,song2025survey}. 
Unlike conventional regression-based policies that directly predict single-step actions, diffusion policies explicitly model the generative distribution of action sequences, formulating action generation as a progressive denoising process that evolves from Gaussian noise toward the target action distribution. 
This generative formulation naturally captures the multimodality and long-horizon dependencies commonly observed in complex tasks, enabling diffusion-based policies to achieve superior action stability and higher success rates.

\begin{figure}[!t]
  \begin{center}
    \centerline{\includegraphics[width=\columnwidth]{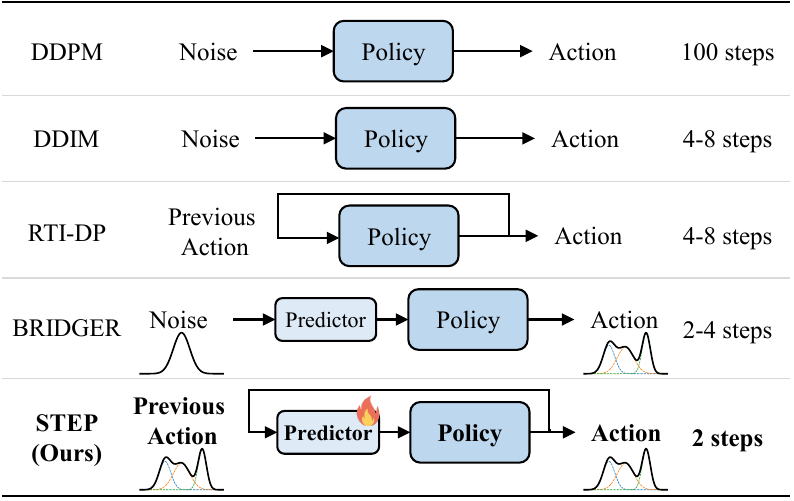}}
    \vskip -0.05in
    \caption{Comparison of inference pipelines for diffusion-based visuomotor policies. Our method leverages spatiotemporally consistent action prediction to reduce sampling to only two steps.}
    \label{overview}
    \vskip -0.45in
  \end{center}
\end{figure}

However, diffusion policies typically require up to 100 steps to generate high-quality actions, limiting the real-time closed-loop control~\cite{ho2020denoising}. 
Prior efforts to accelerate diffusion-based action generation mainly fall into three categories. 
(1) A line of work focuses on accelerating diffusion sampling through improved numerical solvers, including DDIM~\cite{song2020denoising}, DPM-Solver~\cite{lu2022dpm}, and DPM-Solver++~\cite{lu2025dpm}. These methods reformulate the reverse stochastic diffusion process as a deterministic or higher-order ODE and employ advanced integration schemes to reduce sampling steps. Nevertheless, they still rely on repeated evaluations of the same policy network, which limits their efficiency in high-dimensional or high-frequency control scenarios.
(2) Direct prediction approaches, including CP~\cite{prasad2024consistency}, OneDP~\cite{wang2025onestep}, BRIDGER~\cite{chen2024don}, and NO-Diffusion~\cite{aizu2025robot}, aim to reduce diffusion overhead by replacing iterative denoising with a small number of direct prediction or consistency-based refinement steps. These methods learn to map simple latent or noisy distributions toward target action distributions, but the limited capacity of lightweight predictors makes such distribution transformations challenging, often leading to degraded action quality or reduced robustness.
(3) Action reuse strategies such as RTI-DP~\cite{duan2025real}, SDP~\cite{hoeg2025fast}, RNR-DP~\cite{chen2025responsive} and Falcon~\cite{chen2025falcon} exploit temporal continuity to warm-start the diffusion process, reducing iterations at inference time, yet their effectiveness relies on smooth dynamics and cannot reliably guarantee action quality under rapid state changes. 
Although existing methods can reduce inference cost to some extents, they are still insufficient to fully meet the requirements of high-frequency, closed-loop control on resource-constrained platforms.

Motivated by the above limitations, we investigate how to construct a high-quality warm-start initialization for diffusion-based policies that is both distributionally close to the target action manifold and spatiotemporally consistent across successive control steps, while fully preserving the generative expressiveness of the original diffusion policy.
To this end, we propose \textbf{\we}, a diffusion-based control framework equipped with a \emph{spatiotemporal consistency prediction} mechanism that initializes the diffusion process in the vicinity of the target action distribution, enabling robust and low-latency control under rapidly changing system states.
Rather than replacing or distilling the diffusion model, our approach serves as a principled warm-start strategy that substantially reduces the required number of denoising iterations without compromising action quality or stability.
Our main contributions are summarized as follows:

\textbf{(1) Spatiotemporal Consistency Prediction Mechanism.}
We propose a lightweight spatiotemporal consistency prediction mechanism that generates high-quality warm-start actions aligned with both the target action distribution and temporal dynamics, enabling low-latency diffusion-based control. Our \we with 2 steps can achieve an average 21.6\% and 48.8\% higher success rate over spatial-only method BRIDGER~\cite{chen2024don} and temporal-only Falcon~\cite{chen2025falcon} method.

\textbf{(2) Velocity-aware Perturbation Injection Mechanism.}
We propose a lightweight velocity-aware perturbation injection mechanism that prevents execution deadlock especially for real-world robotic deployment by introducing bounded actuation excitation only when necessary, without degrading control precision. Under different denoising steps, our method can reduce the average episode execution time by 59\%, resulting in faster task completion.

\textbf{(3) Local Contractivity Analysis.}
We provide a theoretical analysis showing that our method can induce a locally contractive mapping, which guarantees convergence of action errors during subsequent diffusion refinement.


We conduct extensive evaluations on nine simulated benchmarks and two real-world robotic tasks.
Notably, on the RoboMimic benchmark, \we with 2 steps can achieve an average 21.6\% improvement in task success rate over the state-of-the-art BRIDGER~\cite{chen2024don} method.
On real-world tasks, \we with 2 steps improves average 27.5\% in success rate compared to DDIM.
These results demonstrate that \we consistently advances the Pareto frontier of inference latency and success rate over existing methods.

\section{Background}

Diffusion policy models continuous action generation as a conditional diffusion process, enabling expressive representation of complex and multimodal action distributions for robot control. 
Given an observation $\mathbf{o}$, such as visual inputs or proprioceptive states, the policy generates an action sequence by iteratively denoising a latent action variable conditioned on $\mathbf{o}$.

Let $\mathbf{a}_0 \in \mathbb{R}^{T \times d}$ denote a ground-truth action sequence with horizon $T$ and action dimension $d$. The forward diffusion process gradually corrupts $\mathbf{a}_0$ through a Markov chain:
\begin{equation}
q(\mathbf{a}_t \mid \mathbf{a}_{t-1})
=
\mathcal{N}\Big(
\sqrt{1-\beta_t}\,\mathbf{a}_{t-1},\,
\beta_t \mathbf{I}
\Big),
\end{equation}
where $\{\beta_t\}_{t=1}^N$ defines a noise schedule. This process admits a closed-form reparameterization:
\begin{equation}
\mathbf{a}_t
=
\sqrt{\bar{\alpha}_t}\,\mathbf{a}_0
+
\sqrt{1-\bar{\alpha}_t}\,\boldsymbol{\epsilon},
\quad
\boldsymbol{\epsilon} \sim \mathcal{N}(\mathbf{0}, \mathbf{I})
\end{equation}
with $\alpha_t = 1-\beta_t$ and $\bar{\alpha}_t = \prod_{i=1}^{t} \alpha_i$.

The policy learns a conditional reverse process that progressively removes noise given the observation $\mathbf{o}$:
\begin{equation}
p_\theta(\mathbf{a}_{t-1} \mid \mathbf{a}_t, \mathbf{o})
=
\mathcal{N}\Big(
\boldsymbol{\mu}_\theta(\mathbf{a}_t, t, \mathbf{o}),\,
\boldsymbol{\Sigma}_t
\Big),
\end{equation}
where a neural network $\boldsymbol{\epsilon}_\theta(\mathbf{a}_t, t, \mathbf{o})$ is trained to predict the injected noise. Under this parameterization, the mean of the reverse transition is given by:
\begin{equation}
\boldsymbol{\mu}_\theta
=
\frac{1}{\sqrt{\alpha_t}}
\Big(
\mathbf{a}_t
-
\frac{1-\alpha_t}{\sqrt{1-\bar{\alpha}_t}}
\boldsymbol{\epsilon}_\theta(\mathbf{a}_t, t, \mathbf{o})
\Big).
\end{equation}

Training is performed by minimizing a conditional noise prediction loss $\mathcal{L}$:
\begin{equation}
\mathcal{L}
=
\mathbb{E}_{\mathbf{a}_0,\, \boldsymbol{\epsilon},\, t}
\Big\|
\boldsymbol{\epsilon}
-
\boldsymbol{\epsilon}_\theta(
\sqrt{\bar{\alpha}_t}\mathbf{a}_0
+
\sqrt{1-\bar{\alpha}_t}\boldsymbol{\epsilon},\, t,\, \mathbf{o}
)
\Big\|_2^2,
\end{equation}
which corresponds to optimizing a variational lower bound on the conditional data likelihood.

At inference time, action generation starts from Gaussian noise $\boldsymbol{\epsilon} \sim \mathcal{N}(\mathbf{0}, \mathbf{I})$ and applies the learned reverse process for $N$ denoising steps:
\begin{equation}
\mathbf{a}_{k-1} = \frac{1}{\sqrt{\alpha_k}}
\Big(
\mathbf{a}_k
-
\frac{1-\alpha_k}{\sqrt{1-\bar{\alpha}_k}}
\boldsymbol{\epsilon}_\theta(\mathbf{a}_k, k, \mathbf{o})
\Big)
+ \sigma_k \boldsymbol{\epsilon},
\end{equation}
where $\alpha_k$ and $\bar{\alpha}_k = \prod_{i=1}^{k} \alpha_i$ are diffusion schedule parameters, $\sigma_k$ controls the injected noise, and $\boldsymbol{\epsilon}_\theta$ is the learned denoising network.  
Obtaining high-quality action sequences typically requires nearly 100 denoising steps, which introduces substantial inference latency and makes it challenging to deploy diffusion policies in high-frequency, real-time closed-loop control.

\section{Method}

\subsection{Motivation}
\label{subsec:spatiotemporal_consistency}

Previous methods~\cite{song2020denoising,lu2022dpm,lu2025dpm,prasad2024consistency,chen2024don,duan2025real,chen2025responsive} attempt to accelerate diffusion policies by modifying the initialization or sampling process.
To more clearly understand the differences among these approaches, we introduce three forms of consistency for diffusion policies: temporal consistency, spatial consistency and spatiotemporal consistency.

\textbf{Temporal Consistency.}
Temporal consistency captures the smoothness of actions across successive control steps, reflecting physical continuity and control frequency constraints.
%
%
Given two consecutive time steps $t-1$ and $t$, with the executed action $a_{t-1}$ and a warm-start action $\tilde{a}_t$ at time $t$, we say the warm start satisfies temporal consistency if
\begin{equation}
\label{eq:temporal_consistency}
\|\tilde{a}_t - a_{t-1}\| \le \epsilon_t, \quad \forall t,
\end{equation}
where $\epsilon_t$ is a Lipschitz-like bounded constant determined by the system dynamics and control frequency.


This assumption is commonly exploited by action extrapolation or reuse strategies, which initialize the diffusion process using previous actions.
While effective in reducing inter-step action variation, temporal consistency alone does not guarantee that the initialized action remains compatible with the current system state.

\textbf{Spatial Consistency.}
Spatial consistency characterizes the alignment between the warm-start action and the state-conditioned target action distribution induced by the diffusion policy.
Let $s_t$ denote the current system state and $p(a \mid s_t)$ the corresponding target action distribution.
A warm-start action $\tilde{a}_t$ satisfies spatial consistency if
\begin{equation}
\label{eq:spatial_consistency}
\mathrm{dist}\big(\tilde{a}_t, \mathcal{M}(s_t)\big) \le \epsilon_s, \quad \forall t
\end{equation}
where $\mathcal{M}(s_t)$ denotes the high-probability action manifold of $p(a \mid s_t)$, and $\epsilon_s$ controls the allowable deviation.
Methods based on prediction can be viewed as enforcing spatial consistency.
However, the predictor in these approaches often leads to limited capability, sacrificing generative flexibility and robustness.

\begin{table}[t]
\centering
\footnotesize
\caption{Comparison of diffusion-based control acceleration methods from the perspective of spatiotemporal consistency.}
\label{tab:consistency_comparison}
\vskip -0.05in
\begin{tabular}{ccc}
\toprule
\textbf{Method} & \textbf{TC} & \textbf{SC}  \\
\midrule
Vanilla DDPM & \ding{55} & \ding{55} \\
DDIM/DPM-Solver/DPM-Solver++ & \ding{55} & \ding{55} \\
RTI-DP/SDP/RNR-DP/Falcon & \ding{51} & \ding{55} \\
CP/OneDP & \ding{55} & \ding{55}  \\
BRIDGER & \ding{55} & \ding{51}  \\
\midrule
\textbf{\we(Ours)} & \ding{51} & \ding{51} \\
\bottomrule
\end{tabular}
\vskip -0.3in
\end{table}

Previous methods can be categorized by the type of consistency they enforce, as summarized in Table~\ref{tab:consistency_comparison}.
Standard diffusion samplers (e.g., DDIM~\cite{song2020denoising}, DPM-Solver~\cite{lu2022dpm}, and DPM-Solver++~\cite{lu2025dpm}) as well as distillation-based methods such as CP~\cite{prasad2024consistency} and OneDP~\cite{wang2025onestep} generate action sequences from Gaussian noise without warm-starting.
As a consequence, neither spatial consistency nor temporal consistency is explicitly enforced across control steps.
Methods that reuse historical actions such as RTI-DP~\cite{duan2025real}, RNR-DP~\cite{chen2025responsive} and Falcon~\cite{chen2025falcon} promote temporal consistency through warm-start initialization, yet their initializations are not guaranteed to remain close to the state-conditioned high-probability action manifold, making it challenging to ensure spatial consistency.
BRIDGER~\cite{chen2024don} constructs deterministic mappings between noise and actions for faster sampling, but lack explicit constraints on temporal consistency.
Our \we explicitly enforces both spatial and temporal consistency during warm-start initialization, thereby achieving \textbf{spatiotemporal consistency} and enabling fast and stable diffusion-based control.
A warm-start action $\tilde{a}_t$ is said to be spatiotemporally consistent if it simultaneously satisfies the temporal and spatial consistency in \cref{eq:temporal_consistency} and \cref{eq:spatial_consistency}.
Spatiotemporal consistency ensures that the warm-start action is both temporally smooth and spatially aligned with the current state-conditioned action distribution, providing a stable and informative initialization for subsequent diffusion refinement.

\textbf{Why Spatiotemporal Consistency Matters.}
To clarify why enforcing only a single dimension of consistency is insufficient, we analyze two representative failure cases corresponding to existing acceleration strategies.

\emph{Case I: Temporal warm-start without spatial consistency.}
Methods such as RTI-DP and Falcon accelerate inference by warm-starting the diffusion process from previous actions, thereby promoting temporal consistency across control steps. 
However, these warm-start actions are not explicitly constrained to align with the state-conditioned target action distribution. As a result, the initialization may deviate from the high-probability action manifold induced by the current observation, and even with warm-starting, typically still requires additional steps to correct the distributional mismatch.

\emph{Case II: Spatial warm-start without temporal consistency.}
Methods like BRIDGER train a conditional predictor to predict actions conditioned on the current state, and use the decoder output as a warm-start initialization for diffusion sampling.
While it helps guide the sampling process toward the target action distribution, the predictor operates in a single-shot manner with Gaussian noise as input and does not explicitly condition on previous actions or control steps.
As a result, the warm-start initialization varies across consecutive timesteps $t$, making it difficult to guarantee the temporal consistency and potentially leading to oscillatory behaviors.
Moreover, since the predictor provides only a coarse initialization rather than directly modeling the local action mode, additional denoising steps are often required to refine the action.

\begin{figure}[!t]
  \begin{center}
    \centerline{\includegraphics[width=0.65\columnwidth]{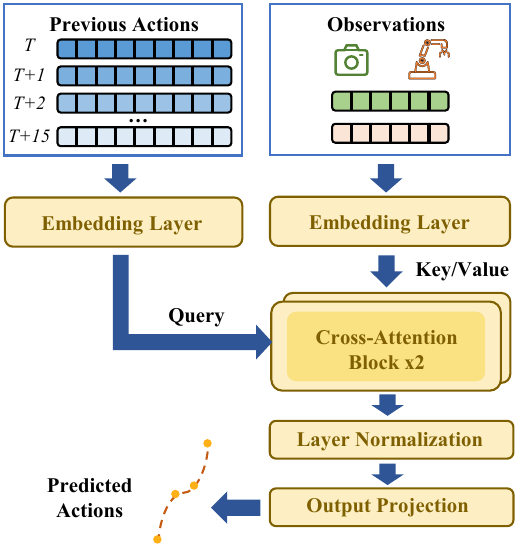}}
    \vskip -0.05in
    \caption{The model architecture of predictor in \we.}
    \label{predictor}
    \vskip -0.4in
  \end{center}
\end{figure}

\subsection{Spatiotemporal Consistency Prediction}

We consider a discrete-time visuomotor control problem with a planning horizon $H$. At each timestep $t$, the policy receives an observation $\mathbf{o}_t \in \mathcal{O}$ and generates a continuous action sequence:
\begin{equation}
\mathbf{A}_t = \left(a_t, a_{t+1}, \dots, a_{t+H-1}\right) \in \mathcal{A}^H,
\end{equation}

To exploit spatiotemporal consistency, we define a predictor:
\begin{equation}
f_\theta : \mathcal{O} \times \mathcal{A}^H \rightarrow \mathcal{A}^H,
\end{equation}
which maps the current observation $\mathbf{o}_t$ and the previous timestep's action sequence $\mathbf{A}_{t-H}$ to a prediction:
\begin{equation}\label{eq:predictor}
\hat{\mathbf{A}}_t = f_\theta(\mathbf{o}_t, \mathbf{A}_{t-H}).
\end{equation}
The predictor is implemented as a multi-layer Transformer with cross-attention, as shown in \cref{predictor}. Historical actions and current observations are first projected to a shared 128-dimensional embedding space, followed by cross-attention to incorporate temporal context. 
The predictor is trained in a supervised manner using MSE loss:
\begin{equation}
\mathcal{L}_{\text{pred}}(\theta) = \mathbb{E}\big[\|\hat{\mathbf{A}}_t - \mathbf{A}_t\|_2^2\big].
\end{equation}
This learning objective encourages $\hat{\mathbf{A}}_t$ to approximate the conditional expectation $\mathbb{E}[\mathbf{A}_t \mid \mathbf{o}_t, \mathbf{A}_{t-H}]$.
After separate training of the predictor and diffusion policy, they are cascaded for inference.

\begin{algorithm}[!t]
\small
  \caption{Warm-Started Diffusion Policy Inference}
  \label{alg:predictor-dp-simplified}
  \begin{algorithmic}[1]
    \STATE {\bfseries Input:} $K$ (total diffusion steps), $K' < K$ (warm-start step), $\sigma_t$ (noise magnitude)
    \STATE {\bfseries Initialize:} cache for previous H-step actions, $\mathbf{A}_{\text{cache}} \gets \text{None}$

    \WHILE{control loop is running}
      \STATE Observe current state $\mathbf{o}_t$
      
      \IF{$t < H$ {\bfseries or} $\mathbf{A}_{\text{cache}} = \text{None}$}
        \STATE Initialize $\mathbf{A}_K \sim \mathcal{N}(\mathbf{0}, \mathbf{I})$
      \ELSE
        \STATE Predict $\hat{\mathbf{A}}_t = f_\theta(\mathbf{o}_t, \mathbf{A}_{\text{cache}})$ 
        \STATE Warm-start: $\tilde{\mathbf{A}}_{K'} = \sigma\hat{\mathbf{A}}_t + \sigma_t\,\boldsymbol{\epsilon}_t, \quad \boldsymbol{\epsilon}_t \sim \mathcal{N}(\mathbf{0}, \mathbf{I})$
      \ENDIF

      \STATE Run reverse process from step $K'$ to $0$ to obtain $\mathbf{A}_t$
      \STATE Update cache: $\mathbf{A}_{\text{cache}} \gets \mathbf{A}_t$
      \STATE Execute actions in $\mathbf{A}_t$
    \ENDWHILE
  \end{algorithmic}
\end{algorithm}

During inference, the predicted sequence $\hat{\mathbf{A}}_t$ is used to initialize the diffusion reverse process at an intermediate step $K' < K$ rather than from pure noise:
\begin{equation}\label{eq:cascade_noise}
\tilde{\mathbf{A}}_{K'} = \sigma\hat{\mathbf{A}}_t + \sigma_t\,\boldsymbol{\epsilon}_t, \quad \boldsymbol{\epsilon}_t \sim \mathcal{N}(\mathbf{0}, \mathbf{I}),
\end{equation}
where $\sigma$ and $\sigma_t$ controls the predicted action and noise magnitude which are discussed in \ref{sec:perturbation}. The reverse denoising process then proceeds from step $K'$ to $0$, producing the final action sequence $\mathbf{A}_t$. 
The entire sequence $\mathbf{A}_t$ is executed and stored in a cache for the next timestep's predictor input. 

\begin{figure}[!t]
  \begin{center}
    \centerline{\includegraphics[width=\columnwidth]{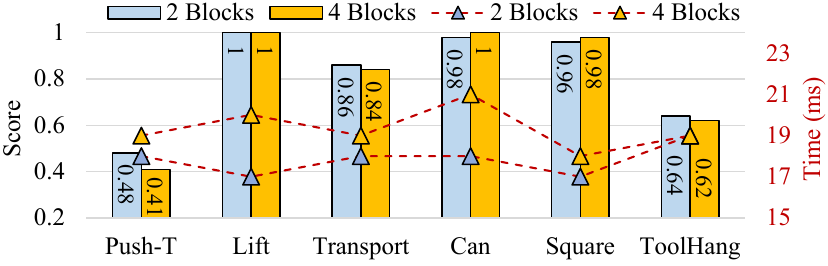}}
    \vskip -0.05in
    \caption{Score and latency with different numbers of cross-attention blocks.}
    \label{fig:num_blocks}
    \vskip -0.4in
  \end{center}
\end{figure}

\begin{figure*}[!t]
  \begin{center}
    \centerline{\includegraphics[width=2.01\columnwidth]{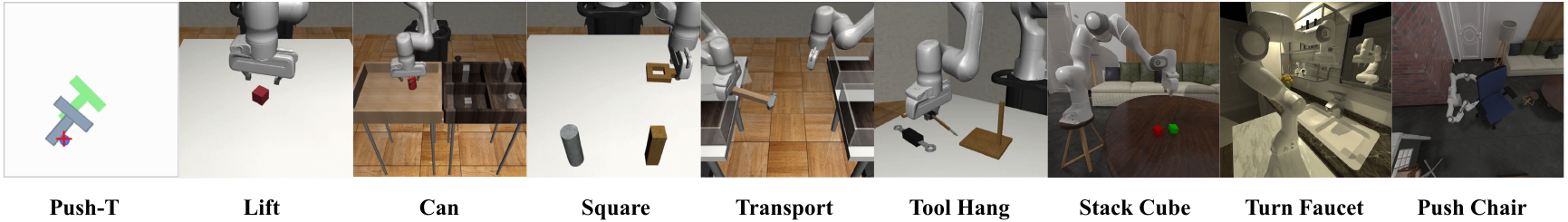}}
    \vskip -0.05in
    \caption{Illustrations of Simulation Experiments. Left to Right: Push-T, RoboMimic's 5 tasks, and ManiSkill2's 3 tasks.}
    \label{fig:task}
    \vskip -0.4in
  \end{center}
\end{figure*}

In \cref{predictor}, the previous action sequence and observations are first encoded by embedding layers, followed by multiple cross-attention blocks~\cite{vaswani2017attention}. The action sequence serves as the query, while the observations are used as the key and value. 
The attention outputs are then transformed via layer normalization and an output projection to match the dimensionality of the input action sequence.
As shown in \cref{fig:num_blocks}, experiments on the RoboMimic benchmark indicate that 2 blocks can achieve the same success rate (score) with lower inference latency. 
Consequently, we adopt two blocks in all subsequent experiments.

\subsection{Velocity-aware Perturbation Injection}
\label{sec:perturbation}



In real-world execution, we observe that the robot may enter an execution deadlock where consecutive actions exhibit minimal variation, leading to insufficient actuation to overcome static friction and control dead zones.
To detect such stagnation, we measure the action variation between consecutive timesteps ($..., t-2H, t-H, t, ...$) as $\Delta \mathbf{A}_t = \mathbf{A}_\text{cache} - \mathbf{A}_{t-2H}$.
Execution stagnation is identified using a threshold on the action variation: $\mathbb{I}_t = \mathbb{I}\!\left( \lVert \Delta \mathbf{A}_t \rVert < \epsilon_a \right)$,
 where $\mathbb{I}(\cdot)$ denotes the indicator function and $\epsilon_a$ is a small constant.

Based on this criterion, we switch between normal execution and perturbed execution by adjusting both the action scaling factor and the perturbation magnitude.
Then, the warm-started action is defined as Line 9 in \cref{alg:predictor-dp-simplified}, where the coefficients are set as
\begin{equation}
\sigma =
\begin{cases}
\sigma_{\mathrm{scale}}, & \mathbb{I}_t = 1, \\
1, & \mathbb{I}_t = 0,
\end{cases}
\qquad
\sigma_t =
\begin{cases}
\sigma_{\mathrm{stall}}, & \mathbb{I}_t = 1, \\
0, & \mathbb{I}_t = 0,
\end{cases}
\end{equation}
where $\sigma_{\mathrm{scale}}$ and $\sigma_{\mathrm{stall}}$ are hyperparameters to controls the predicted action and noise
magnitude.
This formulation injects controlled actuation excitation only when execution stagnation is detected, while preserving the original policy behavior during normal execution.
In practice, we set $\epsilon_a = 0.01$.
A small perturbation $\sigma_{\mathrm{stall}}=0.1$ is sufficient in simulation, whereas real-world experiments require larger $\sigma_{\mathrm{stall}}$ to compensate for static friction and actuation dead zones in Appendix \ref{sec:app_b}.

\subsection{Local Contractivity Analysis}
\label{sec:local-contractivity}

We analyze the local contractive behavior of diffusion-based policy solvers
when initialized from a spatiotemporally consistent warm start.
Our analysis applies uniformly to commonly used reverse solvers.

\textbf{Unified reverse update.}
All solvers considered in this work admit a unified reverse update of the form
\begin{equation}
\mathbf{A}_{k-1}
=
\mu_k(\mathbf{A}_k, \mathbf{o}_t)
\;+\;
\boldsymbol{\xi}_k,
\label{eq:unified-reverse-mean}
\end{equation}
where $\mu_k$ denotes the reverse posterior mean induced by the learned denoising
network,
and $\boldsymbol{\xi}_k$ is a (possibly zero) noise term whose variance is
determined by the scheduler.
DDPM, DDIM, and DPM-Solver differ in the exact form of $\mu_k$ and $\boldsymbol{\xi}_k$,
but all share the same mean-based update structure.

We focus on the contraction property of the mean mapping $\mu_k$,
which governs the stability of the reverse process.
Assume that the learned denoising network $\epsilon_\theta(\mathbf{A},k,\mathbf{o}_t)$
is $L$-Lipschitz continuous in $\mathbf{A}$ within a neighborhood $\mathcal{U}$
of the data manifold.
Then, for standard noise schedules (e.g., linear or cosine),
the reverse posterior mean $\mu_k$ satisfies
\begin{equation}
\|\mu_k(\tilde{\mathbf{A}}_k) - \mu_k(\mathbf{A}_k)\|
\le c_k \|\tilde{\mathbf{A}}_k - \mathbf{A}_k\|,
\quad c_k < 1,
\label{eq:mean-contraction}
\end{equation}
where $c_k$ depends on the noise schedule and the Lipschitz constant $L$.


\begin{proof}[Proof sketch]
Under standard DDPM assumptions, the reverse conditional distribution
$q(\mathbf{A}_{k-1} \mid \mathbf{A}_k, \mathbf{o}_t)$
is Gaussian with mean
\begin{equation}
\mu_k(\mathbf{A}_k, \mathbf{o}_t)
=
\frac{1}{\sqrt{\alpha_k}}
\left(
\mathbf{A}_k
-
\frac{\beta_k}{\sqrt{1-\bar{\alpha}_k}}
\epsilon_\theta(\mathbf{A}_k, k, \mathbf{o}_t)
\right).
\end{equation}
If $\epsilon_\theta$ is $L$-Lipschitz in $\mathbf{A}_k$, then $\mu_k$ is also Lipschitz.
For well-designed noise schedules, one can upper-bound the Lipschitz constant by
a factor $c_k < 1$ (see, e.g., prior analyses of diffusion contractivity).
The same argument applies to DDIM and DPM-Solver,
whose updates correspond to deterministic discretizations of the same reverse-time dynamics.
\end{proof}

By recursively applying \cref{eq:mean-contraction} from step $K'$ down to $0$,
we obtain
\begin{equation}
\|\tilde{\mathbf{A}}_0 - \mathbf{A}_0\|
\le
\prod_{k=1}^{K'} c_k
\|\tilde{\mathbf{A}}_{K'} - \mathbf{A}_{K'}\|.
\label{eq:multi-step-contraction}
\end{equation}
Since $c_k < 1$, the reverse process is locally contractive.
The spatiotemporal predictor ensures that the warm-start initialization
$\tilde{\mathbf{A}}_{K'}$ lies within the contraction neighborhood $\mathcal{U}$.
As a result, initialization errors decay exponentially through reverse diffusion,
enabling stable and efficient inference even with a reduced number of denoising steps.

\begin{table*}[t] 
\scriptsize
\centering 
\caption{State-based Simulation Results on PushT and RoboMimic.} 
\label{tab:simulation_results1} 
\vskip -0.08in
\begin{tabular}{lccccccccccccc} 
\toprule 
\multirow{2}{*}{Method} & \multirow{2}{*}{Step} & \multicolumn{2}{c}{Push-T} & \multicolumn{2}{c}{Lift} & \multicolumn{2}{c}{Transport} & \multicolumn{2}{c}{Can} & \multicolumn{2}{c}{Square} & \multicolumn{2}{c}{ToolHang} \\ 
\cmidrule(lr){3-14}
& & Score & Time(ms) & Score & Time(ms) & Score & Time(ms) & Score & Time(ms) & Score & Time(ms) & Score & Time(ms) \\ 
\midrule 
Vanilla (DDPM) & 100 & 0.94  & 665  & 1.00  & 654  & 0.86  & 682  & 0.98  & 681 & 0.94 & 675  & 0.68  & 674  \\ 
\midrule 
\multirow{2}{*}{DDIM}  & 4 & 0.73  &28  &1.00  &28  &0.84  &28  &0.96  &28  &0.98  &28  &0.60  &30  \\ 
  & 2 & 0.29  & 15  & 0.80 & 14  &0.04  &15  &0.32  &15  &0.84  &14  &0.06  &16  \\ 
\midrule 
\multirow{2}{*}{DPM-Solver++}  &4  &1.00  &36  &1.00  &32  &1.00  &33  &1.00  &35  &1.00  &33  &0  &36  \\ 
 &2  &0.20  &22  &1.00  &21  &0  &21  &0  &24  &1.00  &21  &0  &23  \\ 
\midrule 
\multirow{2}{*}{BRIDGER} 
  & 4 & 0.83 & 27 & 1.00  & 28  & 0.86  & 29  & 1.00  & 29  &0.94  & 29  & 0.52  & 29 \\ 
  & 2 & 0.37 & 14 & 1.00  & 16  & 0.46  & 16  & 0.98  & 16  & 0.84  & 15  & 0.08  & 15  \\ 
\midrule 
RTI-DP  & - & 0.91 & 25 & 0.86 & 31 & 0.50 & 140 & 0.90 & 31 & 0.88 & 79 & 0.34 & 128 \\ 
\midrule 
\multirow{2}{*}{Falcon} &4 &1.00 &36  &1.00  &33  &1.00  &33  &1.00  &35  &1.00  &39  &0  &36  \\ 
 &2 &0.21 &22  &1.00  &21  &0  &21  &0  &24  &1.00  &26  &0  &23  \\ 
\midrule 
\multirow{2}{*}{\we(Ours)} & 4 & 0.91 & 32 & \textbf{1.00} & 31 & 0.82 & 32 & \textbf{1.00} & 31 & 0.96 & 31 & \textbf{0.62} & 33 \\ 
  & 2 & \textbf{0.49} & 18 & \textbf{1.00} & 17 & \textbf{0.86} & 18 & \textbf{0.98} & 18 & \underline{0.96} & 17 & \textbf{0.64} & 19 \\ 
\bottomrule 
\end{tabular} 
\vskip -0.2in
\end{table*}

\section{Evaluation}


\subsection{Simulation Environments and Benchmarks}

We conduct a comprehensive simulated evaluation across 9 tasks drawn from three widely used benchmarks~\cite{mandlekarmatters,florence2022implicit,gumaniskill2} in \cref{fig:task}.

\textbf{Robomimic}~\cite{mandlekarmatters} is a large-scale robotic manipulation benchmark designed to study imitation learning and offline RL. The benchmark consists of 5 tasks: \emph{Lift}, \emph{Can}, \emph{Square}, \emph{Transport}, and \emph{Tool Hang}.

\textbf{Push-T} is a contact-rich task adapted from IBC~\cite{florence2022implicit}, requires pushing a T shaped block to a fixed target with a circular end-effector. Variation is added by random initial conditions for T block and end-effector. 

\textbf{ManiSkill2}~\cite{gumaniskill2} is a benchmark for dexterous robotic manipulation, featuring a diverse set of object-centric tasks with contact-rich interactions. 
We select three representative tasks including \emph{Stack Cube}, \emph{Turn Faucet}, and \emph{Push Chair}, which span different levels of contact complexity, articulation, and dynamics randomization.


\subsection{Evaluation Methodology}


We evaluate our method against 4 categories including 8 state-of-the-art baselines:
(1) Numerical solver: DDPM~\cite{ho2020denoising}, DDIM~\cite{song2020denoising}, and DPM-Solver++~\cite{lu2025dpm}.
(2) Distillation: CP~\cite{prasad2024consistency} and OneDP~\cite{wang2025onestep}.
(3) Prediction: BRIDGER~\cite{chen2024don}.
(4) Action reuse: RTI-DP~\cite{duan2025real}, RNR-DP~\cite{chen2025responsive}, and Falcon~\cite{chen2025falcon}.


For Robomimic and Push-T tasks, we follow the original diffusion policy (DP) codebase~\cite{chi2025diffusion}.
For ManiSkill2 tasks, we adopt the official RNR-DP implementation~\cite{chen2025responsive} to ensure fair comparisons under consistent training and evaluation protocols.
For numerical solver, we adopt the solvers provided by the \textit{Diffusers} library~\cite{von-platen-etal-2022-diffusers} to ensure standardized and reproducible implementations.
For all remaining methods, we use DDIM as the default sampler, as it consistently yields the best empirical performance among the solvers in our preliminary evaluations.
During training, all prediction models are trained for 100,000 optimization steps.
And all diffusion models are trained using the default training configurations provided in their respective official codebases.

All experiments are conducted on a single NVIDIA RTX 4090 GPU for both training and inference.
We evaluate each method using task success rate (Score) and inference latency (Time) as the primary metrics.
For the Push-T task, we report target area coverage (Score).
State-based experiments use only trajectories as conditional inputs, whereas the image-based take images as inputs. 
We set $\sigma=1$ and $\sigma_t=0.1$ in all simulation tasks.

\subsection{Results}

\begin{table*}[t] 
\scriptsize
\centering 
\caption{Image-based Simulation Results on PushT and RoboMimic.} 
\label{tab:simulation_results2} 
\vskip -0.08in
\begin{tabular}{lccccccccccccc} 
\toprule 
\multirow{2}{*}{Method} & \multirow{2}{*}{Step} & \multicolumn{2}{c}{Push-T} & \multicolumn{2}{c}{Lift} & \multicolumn{2}{c}{Transport} & \multicolumn{2}{c}{Can} & \multicolumn{2}{c}{Square} & \multicolumn{2}{c}{ToolHang} \\ 
\cmidrule(lr){3-14}
& & Score & Time(ms) & Score & Time(ms) & Score & Time(ms) & Score & Time(ms) & Score & Time(ms) & Score & Time(ms) \\ 
\midrule 
Vanilla (DDPM) & 100 &0.81 &767  &1.00  &711  &0.86  &736  &0.98  &723  &0.96  &731  &0.86  &736  \\ 
\midrule 
\multirow{2}{*}{DDIM}  & 4 &0.83 &36  &1.00  &37  &0.84  &42  &1.00  &38  &0.92  &36  &0.33  &44  \\ 
  & 2 &0.79 &20  &1.00  &22  &0.78  &28  &0.94  &23  &0.74  &22  &0.5  &30  \\ 
\midrule 
\multirow{2}{*}{DPM-Solver++}  &4 &0.72 &29 &0 &32  &1.00  &40  &0  &35  &0  &33  &0  &34  \\ 
 &2 &0.19 &17 &0 &19  &0  &26  &0  &21  &0  &20  &0  &20  \\ 
\midrule 
CP & - & 0.65 & 27 & 0.99 & 24 & 0.83 & 36 & 0.93 & 28 & 0.84 & 28 & 0.20 & 29 \\ 
\midrule 
\multirow{2}{*}{BRIDGER} 
  & 4 &0.82 &38  &1.00  &39  &0.88  &46  &1.00  &39  &0.96  &40  &0.78  &49 \\ 
  & 2 &0.81 &22  &1.00 &24  &0.88  &31  &0.98  &24  &0.92  &24  &0.72  &33  \\ 
\midrule 
RTI-DP  & - & 0.60 & 122 & 1.00 & 119 & 0.84 & 127 & 0.56 & 121 & 0.66 & 119 & 0.68 & 56 \\ 
\midrule 
\multirow{2}{*}{Falcon} &4  &0.19  &39  &0  &45  &0  &54  &0  &46  &0  &43  &0  &45  \\ 
 &2  &0.19  &25  &0  &30  &0  &38  &0  &33  &0  &29  &0  &30  \\ 
\midrule  
\multirow{2}{*}{\we(Ours)} & 4 &\textbf{0.84} &40  &\textbf{1.00}  &41  &\underline{0.88}  &46  &\textbf{1.00}  &41  &\underline{0.92}  &40  &\textbf{0.92}  &48 \\ 
  & 2 &\textbf{0.86} &24  &\textbf{1.00}  &26  &\underline{0.86}  &32  &\textbf{1.00}  &26  &\textbf{0.96}  &26  &\textbf{0.76}  &33 \\ 
\bottomrule 
\end{tabular} 
\vskip -0.2in
\end{table*}

\begin{table}[t] 
\scriptsize
\centering 
\caption{Simulation Results on ManiSkill2. (Time: ms)} 
\label{tab:simulation_results3} 
\vskip -0.08in
\setlength{\tabcolsep}{5.3pt}
\begin{tabular}{lccccccc} 
\toprule 
\multirow{2}{*}{Method} & \multirow{2}{*}{Step} &\multicolumn{2}{c}{StackCube} & \multicolumn{2}{c}{TurnFaucet} & \multicolumn{2}{c}{PushChair} \\ 
\cmidrule(lr){3-8}
& & Score & Time & Score & Time & Score & Time\\ 
\midrule 
Vanilla(DDPM) & 100 &  0.97 & 582 & 0.22 & 550 & 0.42 & 590 \\ 
\midrule 
\multirow{2}{*}{DPM-Solver++} & 100 &  0.07 & 497 & 0.05 & 481 & 0.43 & 522 \\ 
 & 4 &  0 & 48 & 0 & 48 & 0 & 48 \\ 
\midrule 
\multirow{3}{*}{DDIM} & 4 & 0.97 & 26 & 0.20 & 28 & 0.42 & 24 \\ 
 & 2 & 0.41 & 12 & 0.15 & 17 & 0.39 & 13 \\ 
 & 1 & 0 & 6 & 0 & 9 & 0 & 6 \\ 
 \midrule 
 \multirow{2}{*}{BRIDGER} & 2 & 0.97 & 11 & 0.13 & 12 & 0.45 & 12 \\
 & 1 & 0 & 8 & 0.04 & 7 & 0.40 & 8 \\
 \midrule
\multirow{1}{*}{RTI-DP} & - & 0.96 & 11 & 0.16 & 11 & 0.32 & 12\\ 
 \midrule 
RNR-DP & - & 0.91 & 160 & 0.22 & 158 & 0.45 & 162 \\ 
\midrule 
\multirow{2}{*}{\we(Ours)} & 2 & \underline{0.96} & 11 & \underline{0.20} & 12 & 0.39 & 13 \\
 & 1 & \textbf{0.06} & 7 & \textbf{0.04} & 8 & 0.26 & 8 \\
\bottomrule 
\end{tabular} 
\vskip -0.2in
\end{table}


The experimental results are summarized in Table~\ref{tab:simulation_results1} and \ref{tab:simulation_results2} for Push-T and RoboMimic, and in Table~\ref{tab:simulation_results3} for ManiSkill2.
Bold denotes the best performance under the same number of inference steps, and underline indicates the second-best.

\textbf{Comparison with Standard Diffusion.}
We first compare \we with DDPM~\cite{ho2020denoising}, a conventional diffusion-based policy using 100 denoising steps as a high-performance but high-latency baseline.
For RoboMimic benchmark, \we with 2 steps can achieve score of vanilla DDPM on both state-based and image-based conditions.
And for contact-rich Push-T benchmark, on state-based condition, \we with 4 steps can achieve 0.91 score, approaching DDPM's 0.94, and on image-based condition, the denoising step can be reduced to 2.
\we with 2 steps achieves similar scores on 0.96 on ManiSkill2's tasks.
This indicates that \we preserves the performance of standard diffusion models while improving inference efficiency.  

\textbf{Comparison with Numerical Solver.}
Compared to DDIM~\cite{song2020denoising}, \we achieves comparable performance and inference speed with 4 steps, while significantly outperforming DDIM on more challenging tasks such as RoboMimic ToolHang and in state-only input settings. When the number of denoising steps is further reduced to 2, our method maintains stable performance while achieving even lower inference latency.
Compared to DPM-Solver++~\cite{lu2025dpm}, \we consistently delivers substantially better performance across both 2 and 4 steps.
Notably, under extreme settings such as single-step denoising on ManiSkill2, \we still succeeds with a high probability.

\textbf{Comparison with Distillation.}
Compared to CP~\cite{prasad2024consistency}, \we achieves slightly higher scores on simple tasks under the same inference time budget, and delivers substantial improvements on more challenging tasks such as Push-T and ToolHang, with score gains of 21\% and 56\%, respectively. 
While OneDP~\cite{wang2025onestep} achieves one-step denoising via distillation and attains a score close to vanilla DDPM, Table~\ref{tab:param} shows that our \we is more parameter-efficient, requiring substantially fewer trainable parameters to achieve comparable or better performance.

\textbf{Comparison with Noise Prediction.}
Compared with BRIDGER~\cite{chen2025falcon}, \we achieves higher average score with 2 and 4 steps on RoboMimic benchmarks, as shown in \cref{sandian}.
Especially on more challenging tasks such as Push-T and ToolHang, \we achieves 2-56\% higher score, which indicates that \we demonstrates stronger robustness and generalization under aggressive step reduction.

\textbf{Comparison with Action Reuse.}
For RoboMimic, \we achieves higher score and lower inference latency than RTI-DP~\cite{duan2025real} and Falcon~\cite{chen2025falcon}, as shown in \cref{sandian}.
For ManiSkill2, under the comparable score, \we can achieve lower inference latency than RNR-DP~\cite{chen2025responsive} because RNR-DP only generates one action while \we generate 8 actions for each inference.

\begin{figure}[!t]
  \begin{center}
    \centerline{\includegraphics[width=\columnwidth]{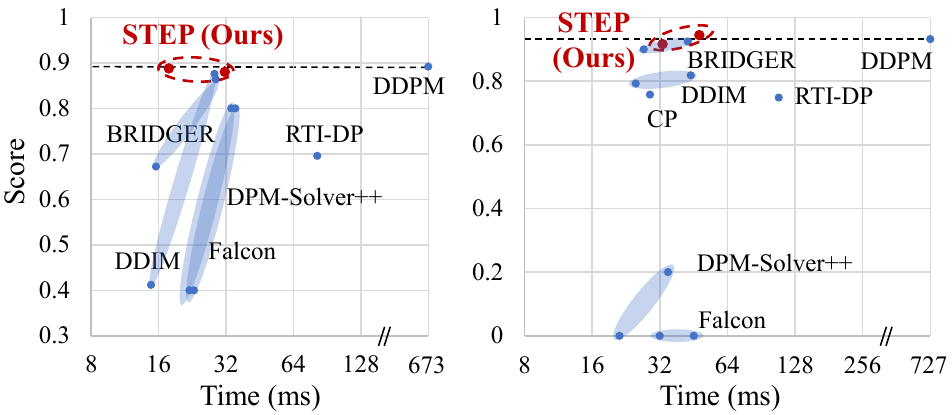}}
    \vskip -0.05in
    \caption{Comparison of \we and baselines on RoboMimic benchmarks. Left: State-based. Right: Image-based.}
    \label{sandian}
    \vskip -0.3in
  \end{center}
\end{figure}

\begin{table}[!t] 
\scriptsize
\centering 
\caption{Comparisons of training parameters.} 
\label{tab:param} 
\vskip -0.08in
\begin{tabular}{ccccc} 
\toprule 
Method &  CP & OneDP & BRIDGER & \we (Ours) \\ 
\midrule 
Parameters(M) & 255.18 & 251.51 & 0.76 & 0.98 \\
\bottomrule 
\end{tabular} 
\vskip -0.25in
\end{table}

\subsection{Discussion}
\textbf{Low-step Generalization and Multimodality.}  
Among numerical solvers, DDIM remains strong generation quality even with few steps, making it a competitive low-step baseline. However, solver-based methods still rely on iterative refinement to address the mismatch between low-step sampling and the target action distribution.
Noise prediction and action reuse methods, such as BRIDGER~\cite{chen2025falcon}, can perform one-step or few-step inference on simple manipulation tasks (e.g., Lift and Can). 
Yet, they tend to overfit to specific task distributions and exhibit limited generalization in multi-task or more diverse scenarios. 
In particular, predicting actions or noise without sufficient temporal regularization often degrades performance under distribution shifts and reduces multimodality.
In contrast, \we explicitly preserves spatiotemporal consistency across action sequences, enabling robust low-step inference while maintaining more expressive multimodal generation capability of the original diffusion policy than other methods. 

\section{Real-World Experiments}

\begin{figure}[!t]
  \begin{center}
    \centerline{\includegraphics[width=\columnwidth]{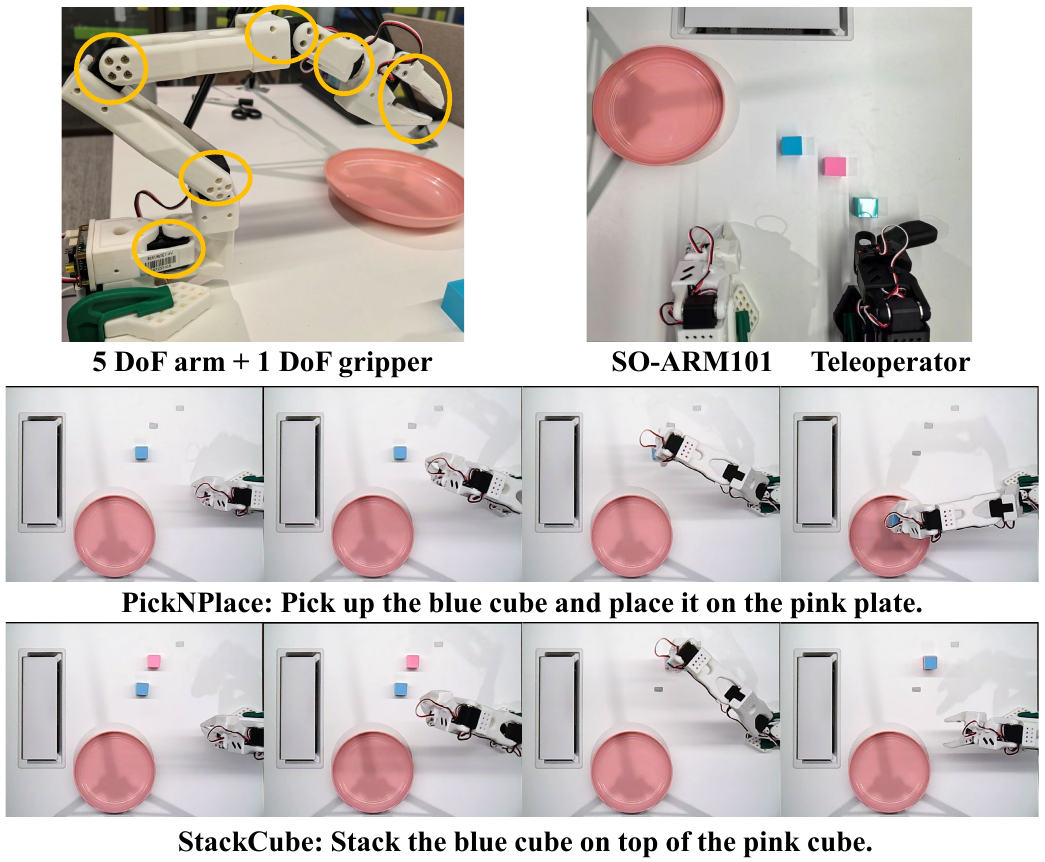}}
    \vskip -0.08in
    \caption{Real-world robot system and tasks.}
    \label{fig:robot}
    \vskip -0.3in
  \end{center}
\end{figure}

\begin{table}[!t] 
\scriptsize
\centering 
\caption{Real-world experiment results on RTX 2050.} 
\label{tab:real_results3} 
\vskip -0.08in
\begin{tabular}{lccccc} 
\toprule 
\multirow{2}{*}{Method} & \multirow{2}{*}{Step} &\multicolumn{2}{c}{PickNPlace} & \multicolumn{2}{c}{StackCube} \\ 
\cmidrule(lr){3-6}
& & Score & Time (ms) & Score & Time (ms)\\ 
\midrule 
Vanilla (DDPM) & 100 &  1.00 & 4229 & 1.00 & 4370 \\ 
\midrule 
\multirow{3}{*}{DDIM} & 8 & 1.00 & 190 & 1.00 & 195 \\ 
 & 4 & 0.95 & 39 & 0.80 & 38 \\ 
 & 2 & 0.60 & 18 & 0.60 & 17 \\ 
\midrule 
\multirow{3}{*}{\we(Ours)} & 8 & 1.00 & 192 & 1.00 & 194 \\
 & 4 & 1.00 & 40 & 1.00 & 40 \\
 & 2 & 0.95 & 20 & 0.80 & 19 \\
\bottomrule 
\end{tabular} 
\vskip -0.25in
\end{table}

\subsection{Robot and Tasks}
We deploy our real-world experiments on single-arm SO-ARM101 robot, which is equipped with a 5-DoF robotic arm, a 1-DoF gripper, and a RGB camera providing top-view observation in \cref{fig:robot}. 
We evaluate our method on two manipulation tasks, \textit{PickNPlace} and \textit{StackCube}, implemented using the LeRobot framework~\cite{cadene2024lerobot}. 
For each task, we collect 20 teleoperated demonstrations.
Each task is evaluated over 20 episodes, and the success rate (score) and inference latency are reported.

\subsection{Evaluation Methodology}
Our model adopts the standard Diffusion Policy formulation with a U-Net backbone, following prior work~\cite{chi2025diffusion}. The action horizon is set to 16 and 8 actions are selected and executed for each loop, consistent with the original Diffusion Policy setting. We train the policy with a batch size of 64 for 200{,}000 steps using an NVIDIA RTX 4090 GPU.
For real-world inference, the policy is deployed on an NVIDIA RTX 2050 GPU (25-35W) with 4GB memory, operating under a realistic edge-device configuration.

\subsection{Results}

As shown in Table~\ref{tab:real_results3}, the vanilla DDPM achieves success rates (score) of 100\% with 100 denoising steps on PickNPlace and StackCube, respectively. 
However, this performance is accompanied by high inference latency, requiring 4220ms and 4370ms per action sequence.
By reducing the number of denoising steps to 8, DDIM can reduce inference latency while preserving task success rates. Nevertheless, when the number of denoising steps is further reduced to 4 or fewer, the success rates collapse, indicating that they fail to maintain feasible solutions in the low-step regime.
In contrast, our \we consistently achieves high task success with only 2 denoising steps, without observable performance degradation. 
This results in an end-to-end inference latency of 20ms, effectively pushing the Pareto frontier by simultaneously improving inference efficiency and preserving real-world task success. Consequently, under the same success rate, \we achieves speedups of 105.7$\times$ and 4.8$\times$ compared to vanilla DDPM and DDIM, respectively.

\begin{figure}[!t]
  \begin{center}
    \centerline{\includegraphics[width=0.95\columnwidth]{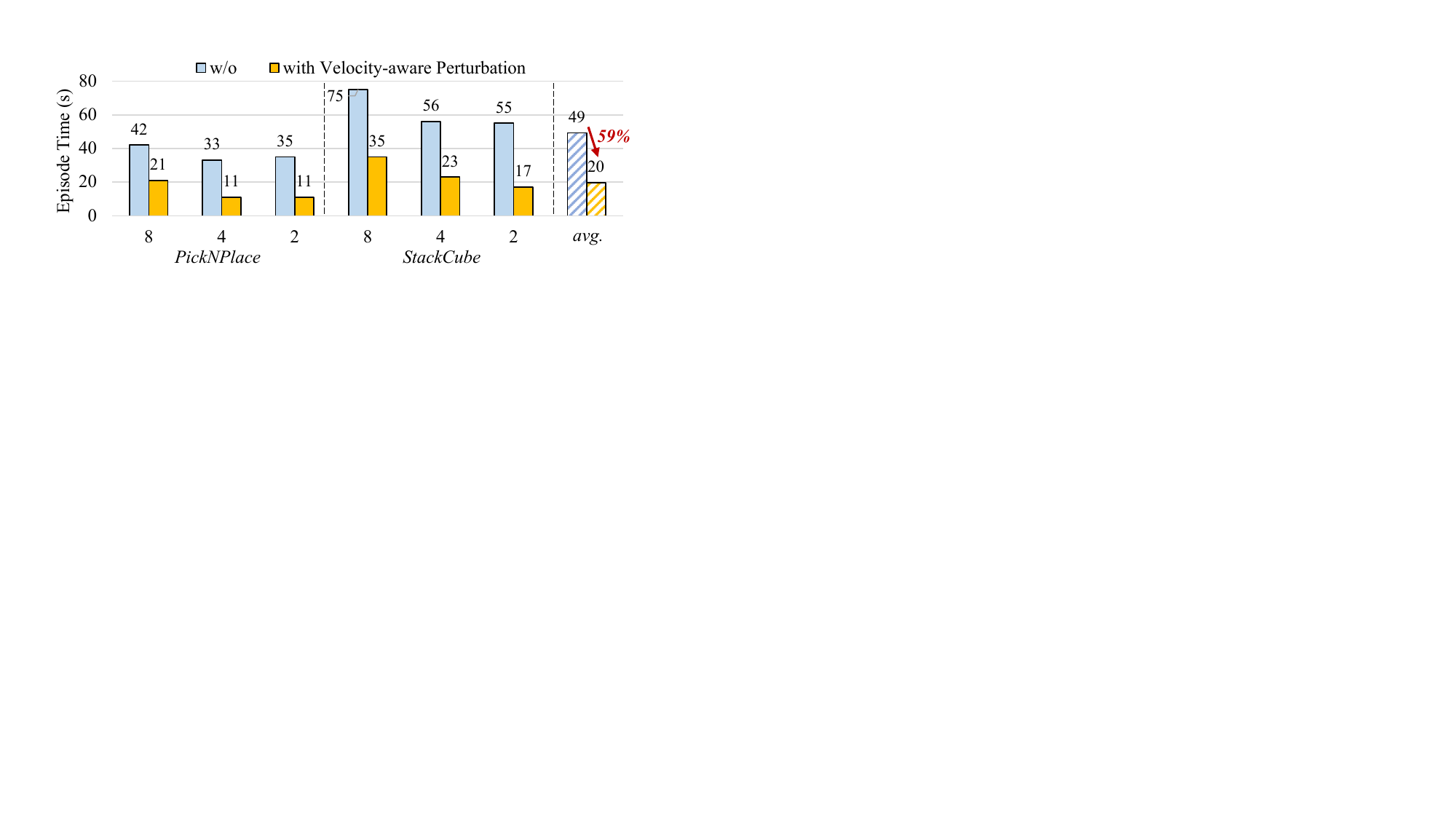}}
    \vskip -0.08in
    \caption{Ablation study for velocity-aware perturbation injection.}
    \label{fig:ablation}
    \vskip -0.45in
  \end{center}
\end{figure}

\subsection{Ablation Study}

We further conduct real-world experiments to evaluate the velocity-aware perturbation injection mechanism.
As shown in \cref{fig:ablation}, under different denoising steps, our method can reduce the average episode execution time by 59\%, resulting in faster task completion and lower energy consumption during real-world execution.

\section{Conclusion}
We propose a low-latency diffusion-based visuomotor policy that enables efficient real-time control through spatiotemporal consistency prediction and velocity-aware perturbation injection. 
Extensive simulation and real-world experiments show that \we can push the Pareto frontier between inference latency and success rate, making diffusion-based visuomotor policies practical for on-device deployment in embodied intelligence systems.

\section*{Impact Statement}

This paper presents work whose goal is to advance the field of Machine
Learning. There are many potential societal consequences of our work, none
which we feel must be specifically highlighted here.

\bibliography{main}
\bibliographystyle{icml2026}

\newpage
\appendix
\onecolumn
\section{Training Details}

We use the CNN-based neural network architecture for both simulation and real-world experiments, we use a 256M parameter version for DDPM. 
Additionally, we adopt the action chunking idea 16 actions per chunk for prediction~\cite{zhao2023learning,chi2025diffusion}, and utilize two observations for vision encoding. 
In Table~\ref{tab:hyperparameters}, we present hyperparamters used to train diffusion policy and our predictor on PushT, RoboMimic and ManiSkill2 benchmarks.

\begin{table}[h] 
\centering 
\caption{Hyperparameters} 
\label{tab:hyperparameters} 
\begin{tabular}{cc} 
\toprule 
 Hyperparameters & Values \\
\midrule
Diffusion Policy Learning Rate & 1e-4 \\
Diffusion Policy Optimizer & AdamW \\
Diffusion Policy Batch Size & 64 \\
Diffusion Policy Scheduler & Warmup \& Cosine Decay \\
Diffusion Policy Iterations & 200000 \\
Action Chunk Size & 16 \\
Number of Observations & 2 \\
DDPM Timesteps & 100 \\
\midrule
Predictor Learning Rate & 1e-4 \\
Predictor Optimizer & AdamW \\
Predictor Batch Size & 64 \\
Predictor Scheduler & Warmup \& Cosine Decay \\
Predictor Iterations & 100000 \\
Number of Cross-attention Blocks & 2 \\
Hidden Dimension of Embeddings & 128 \\
\bottomrule 
\end{tabular} 
\end{table}

\section{More Study on Velocity-aware Perturbation Injection}
\label{sec:app_b}


Table~\ref{tab:sweep_sigma} evaluates the effect of the perturbation scale $\sigma_{\mathrm{stall}}$ on real-world PickNPlace and StackCube tasks. 
Across all inference step settings, we observe a clear unimodal trend: without perturbation ($\sigma_{\mathrm{stall}}=0$), the policy consistently fails due to execution stagnation, while overly large perturbations ($\sigma_{\mathrm{stall}} \geq 1.6$) lead to severe instability and task failure. 
Performance peaks at moderate values $\sigma_{\mathrm{stall}} \in [1.0, 1.4]$, where step=8 and step=4 achieve 100\% success on both tasks, and even the extremely low-latency setting (step=2) reaches up to 95\% success on PickNPlace and 80\% on StackCube. 
These results highlight an inherent trade-off between exploration and control stability, and demonstrate that $\sigma_{\mathrm{stall}}$ constitutes a critical design parameter whose optimal range can be systematically identified through design space exploration rather than heuristic tuning.

\begin{table}[h]
\centering
\caption{Impact of $\sigma_{\mathrm{stall}}$ on Success Rate of Real-World PickNPlace and StackCube Tasks}
\label{tab:sweep_sigma}
\begin{tabular}{ccccccc}
\toprule
\multirow{2}{*}{$\sigma_{\mathrm{stall}}$} & \multicolumn{3}{c}{PickNPlace} & \multicolumn{3}{c}{StackCube} \\
\cmidrule(lr){2-4} \cmidrule(lr){5-7}
 & step=8 & step=4 & step=2 & step=8 & step=4 & step=2 \\
\midrule
0.0  & 0.00 & 0.00 & 0.00 & 0.00 & 0.00 & 0.00 \\
0.2  & 0.10 & 0.10 & 0.05 & 0.10 & 0.05 & 0.05 \\
0.4  & 0.35 & 0.30 & 0.20 & 0.30 & 0.25 & 0.15 \\
0.6  & 0.60 & 0.55 & 0.35 & 0.60 & 0.55 & 0.30 \\
0.8  & 0.90 & 0.85 & 0.60 & 0.85 & 0.80 & 0.60 \\
\rowcolor{blue!10}
1.0  & \textbf{1.00} & 0.95 & 0.75 & \textbf{1.00} & 0.95 & 0.70 \\
\rowcolor{blue!10}
1.2  & \textbf{1.00} & \textbf{1.00} & \textbf{0.95} & \textbf{1.00} & \textbf{1.00} & \textbf{0.80} \\
\rowcolor{blue!10}
1.4  & 0.95 & 0.95 & \textbf{0.95} & 0.95 & 0.90 & 0.75 \\
1.6  & 0.65 & 0.60 & 0.40 & 0.60 & 0.55 & 0.30 \\
1.8  & 0.30 & 0.25 & 0.15 & 0.25 & 0.20 & 0.10 \\
2.0  & 0.00 & 0.00 & 0.00 & 0.00 & 0.00 & 0.00 \\
\bottomrule
\end{tabular}
\end{table}

Table~\ref{tab:sweep_all} further investigates the joint effect of predictor weight $\sigma_{\mathrm{scale}}$ and perturbation scale $\sigma_{\mathrm{stall}}$ under the extremely low-latency setting (step=2). 
Despite the limited denoising budget, a clear two-dimensional unimodal pattern can still be observed. For any fixed $\sigma_{\mathrm{scale}}$, the success rate increases as $\sigma_{\mathrm{stall}}$ grows from zero, reaches its maximum around $\sigma_{\mathrm{stall}} \in [1.2, 1.4]$, and then degrades rapidly when excessive noise is injected. 
Meanwhile, for a fixed $\sigma_{\mathrm{stall}}$, moderate predictor retention ratios ($\sigma_{\mathrm{scale}}=0.2$ or $0.4$) consistently yield higher success rates than both overly conservative settings ($\sigma_{\mathrm{scale}}=0$), which lack corrective adaptation, and overly aggressive settings ($\sigma_{\mathrm{scale}} \ge 0.6$), which reduce the stabilizing effect of the predictor. 
Under the optimal configurations, step=2 achieves up to 95\% success on PickNPlace and 80\% on StackCube, indicating that even in the extreme low-step regime, carefully balancing predictor reliance and noise injection can substantially mitigate execution failures. 
These results suggest that the performance gap introduced by aggressive step reduction is not fundamental, but can be largely compensated through precise calibration of predictor trust and exploration strength.

\begin{table}[h]
\centering
\caption{Impact of $\sigma_{\mathrm{stall}}$ and $\sigma_{\mathrm{scale}}$ on Success Rate of Real-World PickNPlace and StackCube Tasks with step=2}
\label{tab:sweep_all}
\begin{tabular}{cc>{\columncolor{blue!10}}c>{\columncolor{blue!10}}ccccc>{\columncolor{blue!10}}c>{\columncolor{blue!10}}cccc}
\toprule
\multirow{2}{*}{$\sigma_{\mathrm{stall}}$} & \multicolumn{6}{c}{PickNPlace} & \multicolumn{6}{c}{StackCube} \\
\cmidrule(lr){2-7} \cmidrule(lr){8-13}
 & $\sigma_{\mathrm{scale}}$=0 & 0.2 & 0.4 & 0.6 & 0.8 & 1.0 & $\sigma_{\mathrm{scale}}$=0 & 0.2 & 0.4 & 0.6 & 0.8 & 1.0 \\
\midrule
0.0  
 & 0.00 & 0.00 & 0.00 & 0.00 & 0.00 & 0.00 
 & 0.00 & 0.00 & 0.00 & 0.00 & 0.00 & 0.00 \\

0.2  
 & 0.00 & 0.00 & 0.05 & 0.05 & 0.00 & 0.00
 & 0.00 & 0.00 & 0.05 & 0.05 & 0.00 & 0.00 \\

0.4  
 & 0.15 & 0.15 & 0.20 & 0.20 & 0.20 & 0.15
 & 0.10 & 0.10 & 0.15 & 0.15 & 0.15 & 0.10 \\

0.6  
 & 0.25 & 0.30 & 0.35 & 0.35 & 0.25 & 0.20
 & 0.25 & 0.30 & 0.30 & 0.30 & 0.25 & 0.25 \\

0.8  
 & 0.45 & 0.60 & 0.60 & 0.60 & 0.55 & 0.45
 & 0.40 & 0.55 & 0.60 & 0.55 & 0.45 & 0.40 \\

1.0  
 & 0.60 & 0.75 & 0.75 & 0.70 & 0.70 & 0.60
 & 0.50 & 0.65 & 0.70 & 0.65 & 0.55 & 0.50 \\

\rowcolor{blue!10}
1.2  
 & 0.65 & \textbf{0.95} & \textbf{0.95} & 0.85 & 0.75 & 0.65
 & 0.55 & \textbf{0.80} & \textbf{0.80} & 0.70 & 0.60 & 0.55 \\

\rowcolor{blue!10}
1.4  
 & 0.60 & \textbf{0.95} & \textbf{0.95} & 0.80 & 0.70 & 0.60
 & 0.50 & 0.75 & 0.75 & 0.65 & 0.55 & 0.50 \\

1.6  
 & 0.40 & 0.40 & 0.40 & 0.40 & 0.35 & 0.30
 & 0.20 & 0.25 & 0.30 & 0.30 & 0.25 & 0.25 \\

1.8  
 & 0.10 & 0.10 & 0.15 & 0.10 & 0.05 & 0.05
 & 0.00 & 0.05 & 0.10 & 0.10 & 0.05 & 0.00 \\

2.0  
 & 0.00 & 0.00 & 0.00 & 0.00 & 0.00 & 0.00
 & 0.00 & 0.00 & 0.00 & 0.00 & 0.00 & 0.00 \\
\bottomrule
\end{tabular}
\end{table}


\section{Visualization of Failure Cases}

\begin{figure}[h]
  \begin{center}
    \centerline{\includegraphics[width=\columnwidth]{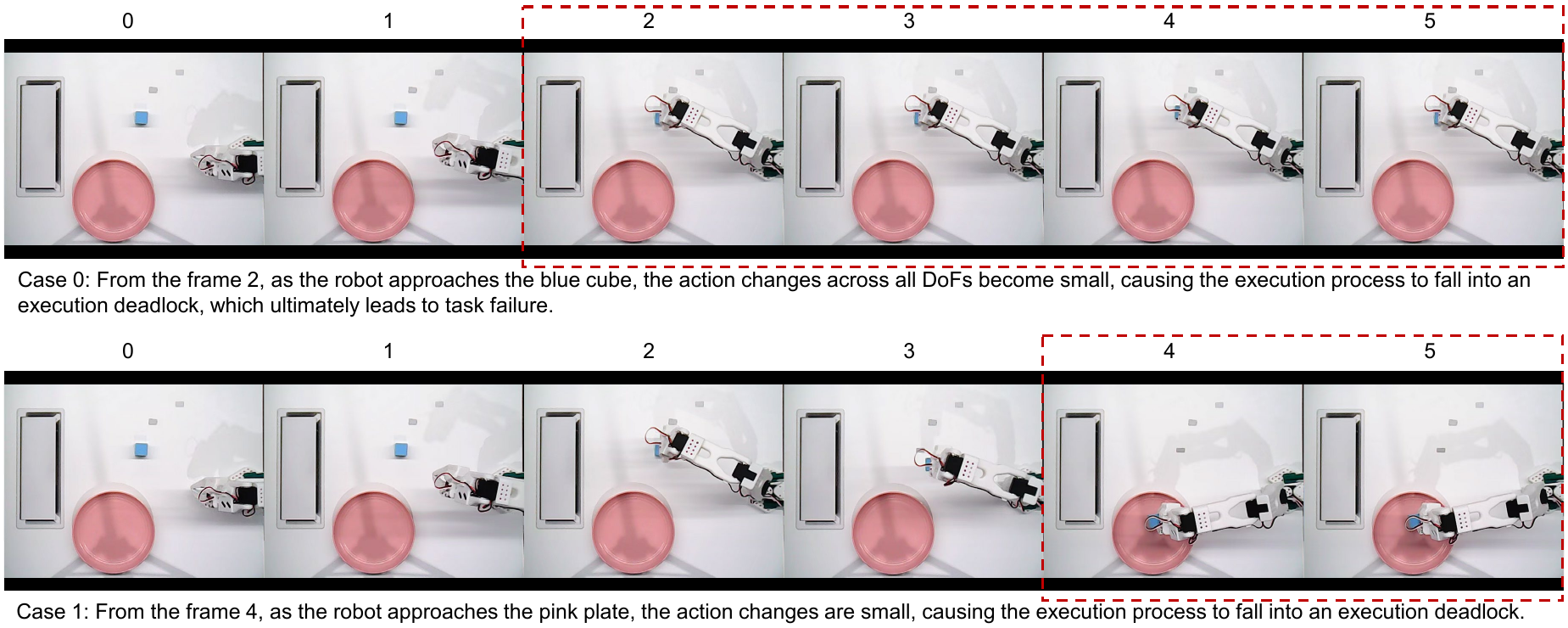}}
    \caption{Failure cases of real-world PickNPlace task.}
    \label{fig:case0}
  \end{center}
\end{figure}

We further visualize failure cases in real-world tasks without applying velocity-aware perturbation injection. \cref{fig:case0} and \cref{fig:case1} show representative failure cases for PickNPlace and StackCube, respectively.
For PickNPlace, in Case 0, from frame 2, as the robot approaches the blue cube, the action changes across all DoFs become extremely small, causing the execution to fall into an execution deadlock and ultimately leading to task failure; similarly, in Case 1, from frame 4, minimal action variations are observed as the robot approaches the pink plate, resulting in execution deadlock.
For StackCube, similar execution deadlocks occur when the robot approaches the target cube, due to vanishing action changes.
By incorporating velocity-aware perturbation injection, such deadlocks can be effectively avoided, enabling stable task execution.

\begin{figure}[!t]
  \begin{center}
    \centerline{\includegraphics[width=\columnwidth]{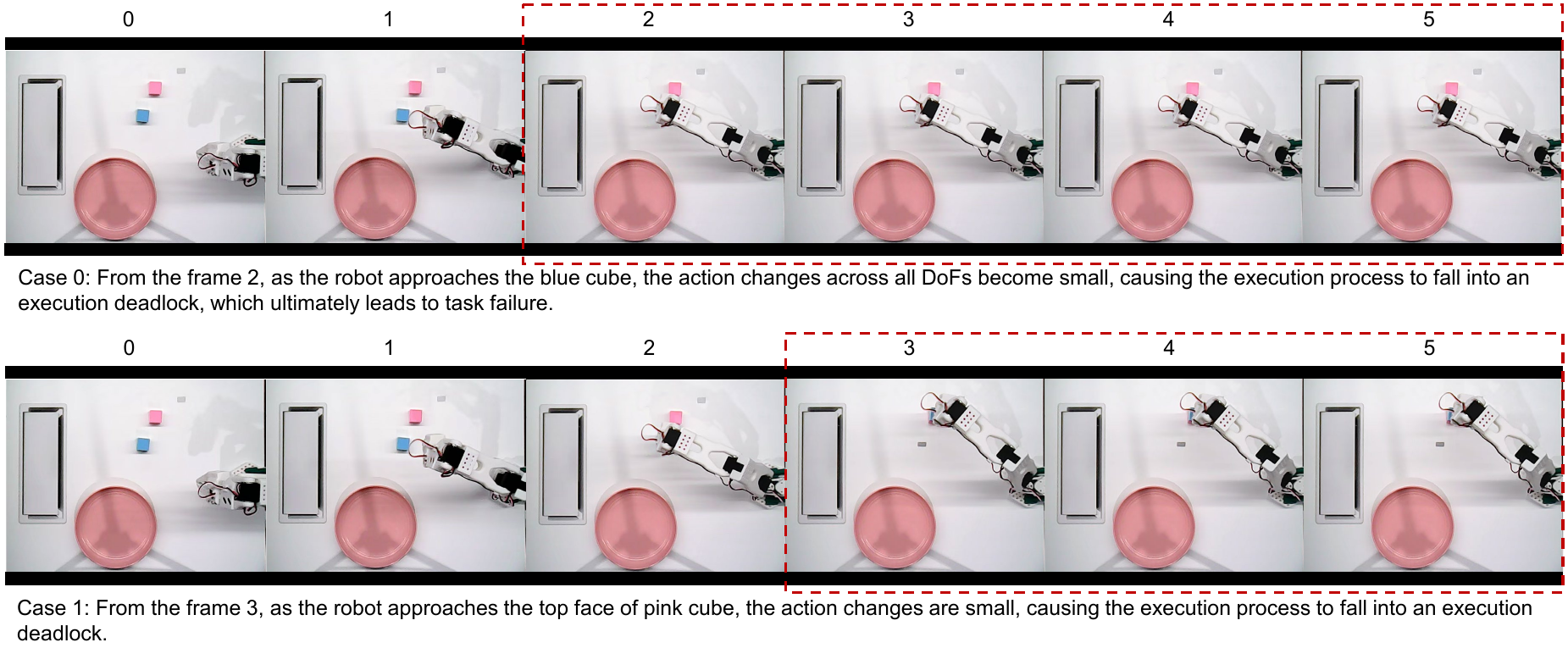}}
    \caption{Failure cases of real-world StackCube task.}
    \label{fig:case1}
  \end{center}
\end{figure}

\theoremstyle{plain}

\theoremstyle{definition}
\section{Formal Theory: Local Contractivity}\label{sec:formal_theory}

\subsection{Problem Setup and Notation}

Let $\epsilon_\theta(A, k, o)$ denote the learned denoising network with parameters $\theta$, where:
\begin{itemize}
\item $A \in \mathbb{R}^{T \times d}$ is the action sequence with horizon $T$ and action dimension $d$
\item $k \in \{1, 2, \ldots, K\}$ is the diffusion timestep (we use $K = 100$)
\item $o$ is the observation (visual or proprioceptive state)
\end{itemize}

The forward diffusion process is defined as:
\begin{equation}
q(A_k | A_0) = \mathcal{N}(A_k; \sqrt{\bar{\alpha}_k} A_0, (1 - \bar{\alpha}_k)I)
\end{equation}
where $\bar{\alpha}_k = \prod_{i=1}^k (1 - \beta_i)$ and $\{\beta_k\}_{k=1}^K$ is the noise schedule.

The reverse process (DDIM formulation) is:
\begin{equation}
A_{k-1} = \sqrt{\bar{\alpha}_{k-1}} \underbrace{\left(\frac{A_k - \sqrt{1-\bar{\alpha}_k}\epsilon_\theta(A_k, k, o)}{\sqrt{\bar{\alpha}_k}}\right)}_{\text{predicted } A_0} + \sqrt{1-\bar{\alpha}_{k-1}} \epsilon_\theta(A_k, k, o)
\end{equation}

\subsection{Assumption 1: Lipschitz Continuity of Denoising Network}\label{sec:assumption1}

\subsubsection{Formal Statement}
\begin{assumption}[Lipschitz Continuity of Denoising Network]\label{ass:lipschitz}
The learned denoising network $\epsilon_\theta(A, k, o)$ is $L$-Lipschitz continuous in $A$ for all $k \in \{1, \ldots, K\}$ and all observations $o$:
\begin{equation}
\|\epsilon_\theta(A, k, o) - \epsilon_\theta(A', k, o)\|_2 \leq L \|A - A'\|_2
\end{equation}
\end{assumption}

\subsubsection{Layer-wise Lipschitz Analysis}

Taking U-Net as an example, the denoising network $\epsilon_\theta$ consists of the following structure:
\begin{itemize}
\item Input: Action $A \in \mathbb{R}^{T \times d}$, timestep $k$, observation $o$
\item Encoder: $N$ convolutional layers with ReLU activation
\item Bottleneck: Fully-connected layers
\item Decoder: $N$ transposed convolutional layers with ReLU activation
\item Output: Predicted noise $\epsilon \in \mathbb{R}^{T \times d}$
\end{itemize}

\begin{lemma}[Lipschitz Constant of Linear Layer]\label{lem:linear_lip}
A linear layer $f(x) = Wx + b$ with weight matrix $W \in \mathbb{R}^{m \times n}$ is Lipschitz continuous with constant $L = \|W\|_2$, where $\|W\|_2$ is the spectral norm (largest singular value).
\end{lemma}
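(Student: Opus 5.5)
The plan is to prove the two halves of the statement separately: first that $\|W\|_2$ is an upper bound on the Lipschitz constant of $f(x)=Wx+b$, and then that no smaller constant works. Throughout, Lipschitz continuity is taken with respect to the Euclidean norm on $\mathbb{R}^n$ and $\mathbb{R}^m$, consistent with \cref{ass:lipschitz}.

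\emph{Upper bound.} For any $x,x'\in\mathbb{R}^n$ the bias cancels, so $f(x)-f(x')=W(x-x')$. By the definition of the induced operator norm, $\|W\|_2=\sup_{v\neq 0}\|Wv\|_2/\|v\|_2$. Hence
\[
\|f(x)-f(x')\|_2 = \|W(x-x')\|_2 \le \|W\|_2\,\|x-x'\|_2 .
\]
To identify this operator norm with the largest singular value, I would use the SVD $W=U\Sigma V^\top$. Since $U$ and $V$ are orthogonal, $\|Wv\|_2=\|\Sigma V^\top v\|_2$ and $\|V^\top v\|_2=\|v\|_2$. Writing $u=V^\top v$, we get $\|\Sigma u\|_2^2=\sum_i\sigma_i^2u_i^2\le\sigma_{\max}^2\|u\|_2^2$.

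\emph{Tightness.} Let $v_1$ be the right singular vector associated with $\sigma_{\max}$. Taking $x-x'=v_1$ gives $\|W v_1\|_2=\sigma_{\max}\|v_1\|_2$, so equality is attained and no constant smaller than $\|W\|_2$ can serve as a Lipschitz constant.

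Neither step is a real obstacle. The only point needing care is the identification of the operator norm with $\sigma_{\max}$, which the SVD argument above settles in one line. It is also worth remarking why this lemma matters here: the same bound will later be combined with the $1$-Lipschitz property of ReLU and with submultiplicativity under composition, so that the U-Net's Lipschitz constant is bounded by $L\le\prod_\ell\|W_\ell\|_2$. For convolutions and transposed convolutions, the same argument applies to their matrix representations.
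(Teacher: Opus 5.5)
Your proposal is correct and follows the same route as the paper: cancel the bias so that $f(x)-f(x')=W(x-x')$, then bound $\|W(x-x')\|_2\le\|W\|_2\|x-x'\|_2$ by the definition of the spectral norm. Your SVD identification of the operator norm with $\sigma_{\max}$ and your tightness argument via the top right singular vector go beyond what the paper proves (it gives only the upper bound, which is all the statement requires), but both additions are valid.
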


\begin{proof}
For any $x, x'$:
\begin{align}
\|f(x) - f(x')\|_2 &= \|Wx + b - Wx' - b\|_2 \\
&= \|W(x - x')\|_2 \\
&\leq \|W\|_2 \|x - x'\|_2
\end{align}
where the inequality follows from the definition of the spectral norm.
\end{proof}

\begin{lemma}[Lipschitz Constant of ReLU]\label{lem:relu_lip}
The ReLU activation $\text{ReLU}(x) = \max(0, x)$ is Lipschitz continuous with constant $L = 1$.
\end{lemma}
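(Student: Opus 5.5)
Since ReLU acts coordinatewise, I will reduce the claim to a scalar inequality and then sum over coordinates. The key step is the one-dimensional bound
\[
|\max(0,x) - \max(0,x')| \le |x - x'| \qquad \text{for all } x,x' \in \mathbb{R}.
\]
I will prove it by cases on the signs of $x$ and $x'$.

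\emph{Both nonnegative.} The left side is exactly $|x-x'|$.

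\emph{Both nonpositive.} The left side is $0$.

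\emph{Mixed signs, say $x>0\ge x'$.} The left side is $x$. Since $x' \le 0$, we have $x \le x - x' = |x-x'|$. The opposite ordering follows by symmetry.

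Alternatively, $\max(0,\cdot)$ is $1$-Lipschitz because it is the pointwise maximum of the $1$-Lipschitz functions $0$ and $\mathrm{id}$. I will use the case split, since it is more elementary and matches the style of \cref{lem:linear_lip}.

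Next I lift the bound to vectors in the $\ell_2$ norm used throughout the section. For $x,x'\in\mathbb{R}^n$ (or matrices in $\mathbb{R}^{T\times d}$ with the Frobenius norm), squaring the scalar bound coordinatewise and summing gives
\[
\|\mathrm{ReLU}(x)-\mathrm{ReLU}(x')\|_2^2 = \sum_i |\max(0,x_i)-\max(0,x_i')|^2 \le \sum_i |x_i-x_i'|^2 = \|x-x'\|_2^2 .
\]
Taking square roots yields the Lipschitz constant $1$.

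To show the constant is tight, take any $x,x'$ with all coordinates positive; then equality holds.

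There is no real obstacle here. The only point needing care is that coordinatewise $1$-Lipschitzness transfers to the $\ell_2$ norm. It does, because the bound holds coordinatewise in absolute value, which is stronger than what is needed. This lemma will later be composed with \cref{lem:linear_lip} to bound the Lipschitz constant $L$ of the whole network in Assumption~\ref{ass:lipschitz}.
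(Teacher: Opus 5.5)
Your proof is correct. It establishes the key scalar inequality $|\max(0,x)-\max(0,x')|\le|x-x'|$ by a different justification than the paper's.

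The paper asserts this inequality and justifies it in one line: ReLU is the projection onto the nonnegative orthant, and such a projection is $1$-Lipschitz. Your route is an explicit sign case split, followed by squaring and summing coordinatewise to reach the $\ell_2$ (or Frobenius) norm.

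The paper's appeal gains brevity and dimension-independence. Coordinatewise ReLU on $\mathbb{R}^n$ is exactly the Euclidean projection onto $\mathbb{R}^n_{\ge 0}$, so nonexpansiveness of projections onto closed convex sets gives the vector $\ell_2$ bound directly. The paper, however, writes only the scalar version and leaves that step implicit.

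Your argument is self-contained and needs no fact about convex projections. It also makes explicit the passage to the $\ell_2$ norm that the Lipschitz assumption on $\epsilon_\theta$ actually uses. The tightness remark (equality for distinct $x\neq x'$ in the positive orthant) is an extra the paper does not state.

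Your alternative via the pointwise maximum of the $1$-Lipschitz functions $0$ and $\mathrm{id}$ is also valid. It is a scalar argument, though, so it would still need your coordinatewise lifting to cover the vector case.
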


\begin{proof}
For any $x, x'$:
\begin{align}
|\text{ReLU}(x) - \text{ReLU}(x')| &= |\max(0, x) - \max(0, x')| \\
&\leq |x - x'|
\end{align}
The inequality holds because ReLU is a projection onto the non-negative orthant, which is a 1-Lipschitz operation.
\end{proof}

\begin{lemma}[Lipschitz Constant of Convolutional Layer]\label{lem:conv_lip}
A convolutional layer with kernel $K \in \mathbb{R}^{c_{\text{out}} \times c_{\text{in}} \times h \times w}$ is Lipschitz continuous with constant $L \leq \sqrt{h \cdot w} \cdot \|K\|_2$, where $\|K\|_2$ is the spectral norm of the unfolded kernel matrix.
\end{lemma}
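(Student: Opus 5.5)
The plan is to write the convolution as a composition of two linear maps: an \emph{unfolding} (im2col) operator $\mathcal{U}$ that extracts patches, followed by multiplication by the unfolded kernel matrix $K_{\mathrm{mat}} \in \mathbb{R}^{c_{\text{out}} \times (c_{\text{in}} h w)}$. Once this factorization is in place, the bound follows from Lemma~\ref{lem:linear_lip} together with a counting bound on $\mathcal{U}$.

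\textbf{Step 1 (reduction to the linear part).} Write the layer as $f(x) = \mathrm{conv}(K, x) + b$, where the bias $b$ is broadcast over spatial positions. For any $x, x'$ the bias cancels, exactly as in the proof of Lemma~\ref{lem:linear_lip}. By linearity of convolution in its input, $f(x) - f(x') = \mathrm{conv}(K, x - x')$. It therefore suffices to show $\|\mathrm{conv}(K, z)\|_2 \le \sqrt{hw}\,\|K_{\mathrm{mat}}\|_2 \|z\|_2$ for every $z$, where all norms on tensors are Frobenius (entrywise $\ell_2$) norms.

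\textbf{Step 2 (factorization).} Let $P$ be the number of output spatial positions. Define $\mathcal{U}(z) \in \mathbb{R}^{(c_{\text{in}} h w) \times P}$ as the matrix whose $p$-th column is the vectorized $c_{\text{in}} \times h \times w$ patch of $z$ (zero-padded as needed) feeding output position $p$. Then $\mathrm{conv}(K, z) = K_{\mathrm{mat}}\, \mathcal{U}(z)$, up to reshaping into $c_{\text{out}} \times P$. Applying the spectral-norm inequality column by column and summing gives
\[
\|K_{\mathrm{mat}}\,\mathcal{U}(z)\|_F^2 = \sum_{p} \|K_{\mathrm{mat}}\, u_p\|_2^2 \le \|K_{\mathrm{mat}}\|_2^2\, \|\mathcal{U}(z)\|_F^2 .
\]

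\textbf{Step 3 (multiplicity bound).} The key claim is $\|\mathcal{U}(z)\|_F^2 \le hw\,\|z\|_2^2$. Each entry $z_{c,i,j}$ occupies a given kernel offset $(r,s)$ in at most one patch, namely the patch whose anchor is $(i-r, j-s)$ for stride $1$. There are $hw$ offsets, so each entry appears at most $hw$ times in $\mathcal{U}(z)$. Padded zeros contribute nothing. Summing squares yields the claim, and combining with Step 2 gives the stated constant $\sqrt{hw}\,\|K_{\mathrm{mat}}\|_2$.

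\textbf{Main obstacle.} I expect the main difficulty to be making this multiplicity count rigorous under the actual layer hyperparameters: stride, padding mode and dilation. For stride $s \ge 1$ and zero padding, the number of patches containing a given entry only decreases, so the bound $hw$ survives. Circular or reflect padding, however, can make a padded value duplicate an interior entry. I would handle this by bounding the multiplicity in the padded tensor and noting that reflect or circular padding maps each original entry to at most a constant number of padded entries. Alternatively, I would restrict the statement to zero padding, which is what the 1D temporal U-Net uses; there $h = 1$ and the factor becomes $\sqrt{w}$.

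Transposed convolutions in the decoder are adjoints of convolutions. Since the operator norm is invariant under taking adjoints, they inherit the same bound.
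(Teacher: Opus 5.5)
Your argument is correct for zero padding, with any stride or dilation. It takes a more explicit route than the paper's, and it supplies the step the paper only asserts.

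The paper represents the whole layer as one Toeplitz-structured matrix $T_K$ and states $\|T_K\|_2 \le \sqrt{hw}\,\|K\|_2$ ``due to the structure of the Toeplitz matrix''. It gives no reason why that structure produces the factor $\sqrt{hw}$.

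You instead factor the same operator as $T_K = \Pi\,(I_P\otimes K_{\mathrm{mat}})\,\mathcal{U}$: im2col, then a block-diagonal copy of the unfolded kernel, then a reshaping permutation $\Pi$. The norm then splits into $\|K_{\mathrm{mat}}\|_2$ times $\|\mathcal{U}\|_2$, and your multiplicity count controls the second factor. Equivalently, each row of $\mathcal{U}$ selects a single input coordinate, so $\mathcal{U}^\top\mathcal{U}$ is diagonal with the multiplicities on its diagonal.

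The paper's version buys brevity. Yours buys an actual proof and shows which hypotheses are needed. Stride and dilation only lower multiplicities, but the padding mode genuinely matters. Restricting to zero padding is therefore necessary, not just convenient. Take a single-channel all-ones kernel of width $3$ with reflect padding $1$:
\begin{itemize}
\item The claimed constant is $\sqrt{3}\cdot\sqrt{3}=3$.
\item Every row of $T_K$ sums to $3$, so the all-ones input has Rayleigh quotient exactly $9$.
\item The column sums of $T_K$ are not constant: the first is $2$, while they average $3$. So the all-ones vector is not an eigenvector of $T_K^\top T_K$.
\item Hence $\|T_K\|_2>3$, and the stated constant fails.
\end{itemize}
Your fallback of counting padded copies for reflect or circular padding would only give a larger constant. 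The right resolution is to state the lemma for zero padding, which is what the 1D temporal U-Net uses.
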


\begin{proof}
Convolution can be expressed as matrix multiplication with a Toeplitz matrix $T_K$ constructed from the kernel. The spectral norm of $T_K$ satisfies $\|T_K\|_2 \leq \sqrt{h \cdot w} \cdot \|K\|_2$ due to the structure of the Toeplitz matrix.
\end{proof}


\begin{lemma}[Lipschitz Constant of Function Composition]\label{lem:composition}
If $f$ is $L_f$-Lipschitz and $g$ is $L_g$-Lipschitz, then $f \circ g$ is $(L_f \cdot L_g)$-Lipschitz.
\end{lemma}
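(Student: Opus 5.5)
The plan is to verify the Lipschitz inequality directly from the definition, by chaining the two individual Lipschitz bounds through the intermediate point $g(x)$. We work with general metric spaces, though in the paper's setting all spaces are Euclidean with the $\ell_2$ norm as in \cref{lem:linear_lip} and \cref{lem:relu_lip}. We assume $g:X\to Y$ and $f:Y\to Z$, so that $f\circ g$ is well defined.

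Fix arbitrary $x,x'\in X$ and set $y=g(x)$, $y'=g(x')$. First apply the $L_f$-Lipschitz property of $f$ at the pair $(y,y')$ to get
\begin{equation*}
\|f(g(x))-f(g(x'))\|\le L_f\|g(x)-g(x')\|.
\end{equation*}
Next apply the $L_g$-Lipschitz property of $g$, using $L_f\ge 0$ so that multiplying the inequality preserves its direction:
\begin{equation*}
L_f\|g(x)-g(x')\|\le L_fL_g\|x-x'\|.
\end{equation*}
Combining the two displays gives $\|(f\circ g)(x)-(f\circ g)(x')\|\le L_fL_g\|x-x'\|$. Since $x$ and $x'$ were arbitrary, this proves the claim.

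There is no genuine obstacle in this argument. The only points to check are nonnegativity of the constants and compatibility of domains. If the networks are only Lipschitz on a neighborhood $\mathcal{U}$, as in \cref{eq:mean-contraction}, one must also check that $g$ maps the relevant region into the set where $f$ is Lipschitz.

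By induction, the result extends to $N$-fold compositions with constant $\prod_i L_i$. Combined with \cref{lem:linear_lip}, \cref{lem:relu_lip}, and \cref{lem:conv_lip}, this is what will bound the constant $L$ in \cref{ass:lipschitz} for the U-Net by the product of its per-layer constants.
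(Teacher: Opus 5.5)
Your proposal is correct and takes the same approach as the paper, which also proves the lemma by chaining $\|f(g(x))-f(g(x'))\|\le L_f\|g(x)-g(x')\|\le L_fL_g\|x-x'\|$. Your extra remarks on $L_f\ge 0$ and on needing $g$ to map into the region where $f$ is Lipschitz are refinements that the paper leaves implicit.
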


\begin{proof}
For any $x, x'$:
\begin{align}
\|f(g(x)) - f(g(x'))\| &\leq L_f \|g(x) - g(x')\| \\
&\leq L_f \cdot L_g \|x - x'\|
\end{align}
\end{proof}


\begin{theorem}[Lipschitz Constant of U-Net Denoising Network]\label{thm:unet_lip}
Consider a U-Net with:
\begin{itemize}
\item $N$ encoder layers with weights $\{W_i^e\}_{i=1}^N$ and ReLU activations
\item $M$ bottleneck layers with weights $\{W_j^b\}_{j=1}^M$ and ReLU activations
\item $N$ decoder layers with weights $\{W_i^d\}_{i=1}^N$ and ReLU activations
\item Skip connections (additive, 1-Lipschitz)
\end{itemize}

The Lipschitz constant of the denoising network satisfies:
\begin{equation}
L_{\epsilon} \leq \prod_{i=1}^N \|W_i^e\|_2 \cdot \prod_{j=1}^M \|W_j^b\|_2 \cdot \prod_{i=1}^N \|W_i^d\|_2
\end{equation}
\end{theorem}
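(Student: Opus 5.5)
The plan is to view $\epsilon_\theta(\cdot,k,o)$, for fixed timestep $k$ and observation $o$, as a composition of maps in $A$ alone and then apply Lemma~\ref{lem:composition} repeatedly. First I fix $(k,o)$. The conditioning enters through timestep embeddings and FiLM-style modulation, and with $(k,o)$ fixed each of these is an affine map in the hidden activations: either an additive bias or a channelwise scaling. I absorb that scaling into the adjacent weight matrix, so each layer keeps the form $x\mapsto W x+b$. Every convolution or transposed convolution I represent by its Toeplitz operator $T_K$, and I write $\|W_i^e\|_2,\|W_i^d\|_2$ for the spectral norm of that operator. Lemma~\ref{lem:linear_lip} then applies to it directly. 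If one prefers to keep the kernel norm, Lemma~\ref{lem:conv_lip} bounds each operator norm by $\sqrt{hw}\,\|K\|_2$, and these factors are absorbed in the same way.

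Second, each layer is $\mathrm{ReLU}\circ(\text{affine})$. By Lemmas~\ref{lem:linear_lip} and~\ref{lem:relu_lip} together with Lemma~\ref{lem:composition}, each layer is $\|W\|_2$-Lipschitz. Chaining the $N$ encoder, $M$ bottleneck and $N$ decoder layers with Lemma~\ref{lem:composition} gives exactly the stated product along the main path.

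The main obstacle is the skip connections. An additive skip merges two branches as $x\mapsto g(x)+h(x)$, whose Lipschitz constant is bounded by $L_g+L_h$, not by a product. A literal expansion therefore gives a sum over encoder--decoder paths, of the form $\prod_i(\|W_i^d\|_2+\dots)$, rather than the single product. My first attempt is to argue that the stated product dominates. This works under the normalization that every layer's spectral norm is at least $1$ (equivalently, after rescaling, the product over the full path dominates each shortcut path), combined with the stated convention that the skip merge itself is $1$-Lipschitz.

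If that domination cannot be justified in general, I fall back on an explicit recursive bound $L^{(i)}\le\|W_i^d\|_2\,(L^{(i+1)}+L_{\mathrm{enc},i})$, unrolled from the bottleneck outward. I then show that it reduces to the displayed product under that normalization, and I state this as the precise hypothesis under which the theorem holds.

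Residual blocks inside the U-Net are handled by the same additive-merge argument, with the identity contributing $1$.
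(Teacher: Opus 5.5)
\textbf{The skip-connection step fails, and the statement itself is false.} Your main-path argument is the same as the paper's Steps 1--3 and 5: you fix $(k,o)$, view each layer as $\mathrm{ReLU}$ after an affine map normed by its Toeplitz operator, and chain with Lemma~\ref{lem:composition}. You also correctly locate the real difficulty in the skip connections, which the paper's Step 4 glosses over. Your proposed resolution, however, does not work.

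An additive merge gives $\mathrm{Lip}(g+h)\le L_g+L_h$. Unrolling your recursion $L^{(i)}\le\|W_i^d\|_2\,(L^{(i+1)}+L_{\mathrm{enc},i})$ therefore yields a sum of $N+1$ path products, of which the displayed product is only one summand. The normalization $\|W\|_2\ge 1$ makes each shortcut term at most the full-path term, but it cannot make a sum of nonnegative terms at most one of them. Calling the merge 1-Lipschitz does not help either: $(u,v)\mapsto u+v$ is 1-Lipschitz only for the norm $\|u\|+\|v\|$ on pairs, and in that norm the fan-out $x\mapsto(g(x),h(x))$ already costs $L_g+L_h$.

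In fact the displayed inequality fails under the theorem's own hypotheses. Take $N=M=1$, scalar layers with zero bias and all weights equal to $c>0$, and add the skip at the decoder input: $y=\mathrm{ReLU}\bigl(c\,(b+e)\bigr)$ with $e=\mathrm{ReLU}(cx)$ and $b=\mathrm{ReLU}(ce)$. Then $y=(c^3+c^2)\,\mathrm{ReLU}(x)$. Its Lipschitz constant $c^3+c^2$ exceeds the product $c^3$ for every $c>0$ (for $c=1$ it is $2$ versus $1$), so no normalization or rescaling rescues the claim. Your treatment of residual blocks has the same defect, since each contributes a factor $1+L_f$ rather than $L_f$.

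So no sharper argument can close this gap; the statement must be weakened. The paper's Step 4 makes exactly the leap you could not justify. It derives $L_f+1$ for a skip and then asserts the product bound anyway because skips are ``added after the decoder processing,'' so there is no hidden idea to borrow from it.

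What you can honestly prove is the bound your recursion already gives. With the level-$i$ skip added to the input of decoder layer $i$, it reads $L_\epsilon\le\prod_{i}\|W_i^d\|_2\prod_{j}\|W_j^b\|_2\prod_{i}\|W_i^e\|_2+\sum_{i=1}^{N}\prod_{l\le i}\|W_l^d\|_2\,\|W_l^e\|_2$. Under $\|W\|_2\ge 1$ this is at most $(N+1)$ times the displayed product. That bound is finite, and a finite $L$ is all that Assumption~\ref{ass:lipschitz} requires.

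Also make explicit that absorbing the FiLM scalings, or the $\sqrt{hw}$ factors of Lemma~\ref{lem:conv_lip}, changes the matrices. For example, $\|\mathrm{diag}(\gamma_{k,o})W\|_2\le\|\gamma_{k,o}\|_\infty\|W\|_2$. The $W$'s in your bound are therefore the modulated operators, not the network's raw weights.
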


\begin{proof}
\textbf{Step 1: Encoder Lipschitz constant.}

Each encoder layer consists of convolution + ReLU. By Lemmas \ref{lem:conv_lip} and \ref{lem:relu_lip}:
\begin{equation}
L_{\text{enc},i} = \|W_i^e\|_2 \cdot 1 = \|W_i^e\|_2
\end{equation}

By Lemma \ref{lem:composition}, the full encoder has Lipschitz constant:
\begin{equation}
L_{\text{enc}} = \prod_{i=1}^N \|W_i^e\|_2
\end{equation}

\textbf{Step 2: Bottleneck Lipschitz constant.}

Similarly, the bottleneck has:
\begin{equation}
L_{\text{bottleneck}} = \prod_{j=1}^M \|W_j^b\|_2
\end{equation}

\textbf{Step 3: Decoder Lipschitz constant.}

The decoder has:
\begin{equation}
L_{\text{dec}} = \prod_{i=1}^N \|W_i^d\|_2
\end{equation}

\textbf{Step 4: Skip connections.}

Skip connections are of the form $y = f(x) + x$, where $f$ is the decoder path. For Lipschitz analysis:
\begin{align}
\|y - y'\| &= \|f(x) + x - f(x') - x'\| \\
&\leq \|f(x) - f(x')\| + \|x - x'\| \\
&\leq L_f \|x - x'\| + \|x - x'\| \\
&= (L_f + 1) \|x - x'\|
\end{align}

However, in U-Net architectures, the skip connections are concatenated or added after the decoder processing, so the effective Lipschitz constant is bounded by the product of encoder, bottleneck, and decoder constants.

\textbf{Step 5: Full network.}

Combining all components:
\begin{equation}
L_{\epsilon} \leq L_{\text{enc}} \cdot L_{\text{bottleneck}} \cdot L_{\text{dec}} = \prod_{i=1}^N \|W_i^e\|_2 \cdot \prod_{j=1}^M \|W_j^b\|_2 \cdot \prod_{i=1}^N \|W_i^d\|_2
\end{equation}
\end{proof}

\begin{lemma}[Timestep Embedding Lipschitz Property]\label{lem:time_lip}
The timestep embedding $\text{emb}(k) \in \mathbb{R}^d$ used in the denoising network is 1-Lipschitz in $k$ (discrete sense) and bounded.
\end{lemma}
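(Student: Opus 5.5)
The lemma concerns the standard sinusoidal timestep embedding used in the Diffusion Policy U-Net. I will fix its concrete form: for $d$ even and $i=0,\dots,d/2-1$, set $\omega_i = \Omega^{-2i/d}$ with $\Omega=10^4$. Then
\[
\mathrm{emb}(k)=\big(\sin(\omega_0 k),\dots,\sin(\omega_{d/2-1}k),\cos(\omega_0 k),\dots,\cos(\omega_{d/2-1}k)\big).
\]
Boundedness is immediate. Each coordinate lies in $[-1,1]$, and each pair satisfies $\sin^2(\omega_i k)+\cos^2(\omega_i k)=1$. Hence $\|\mathrm{emb}(k)\|_2=\sqrt{d/2}$ exactly for every $k$, independent of the diffusion step.

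For the Lipschitz property, I extend $k$ to a real variable and bound the derivative coordinatewise. Each coordinate is differentiable with derivative magnitude at most $\omega_i$. By the mean value theorem, each sine/cosine pair moves by at most $2|\sin(\omega_i(k-k')/2)|\le \omega_i|k-k'|$ in Euclidean norm, since it is a chord on the unit circle. This gives
\[
\|\mathrm{emb}(k)-\mathrm{emb}(k')\|_2 \le \Big(\sum_{i=0}^{d/2-1}\omega_i^2\Big)^{1/2}|k-k'|.
\]

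The main obstacle is that this constant is not $1$ in general. Already $\omega_0=1$, so the sum exceeds $1$ once $d\ge 4$, and the Euclidean constant is about $\sqrt{1+\Omega^{-4/d}+\dots}$. The lemma only claims 1-Lipschitz "in the discrete sense", so I will prove the statement in a form that is actually true.

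\textbf{Option (a).} State it per coordinate: each coordinate changes by at most $\omega_i|k-k'|\le|k-k'|$, because $\omega_i\le 1$. This means $\mathrm{emb}$ is 1-Lipschitz from $(\mathbb{Z},|\cdot|)$ into $(\mathbb{R}^d,\|\cdot\|_\infty)$.

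\textbf{Option (b).} Absorb the constant $C_{\mathrm{emb}}=(\sum_i\omega_i^2)^{1/2}$ into the first MLP layer of the time-conditioning pathway, after normalizing the embedding by it.

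I will present (a) as the precise meaning of the lemma and note (b) as a remark. I will also stress that only boundedness is needed downstream. The contraction analysis is for fixed $k$ in $A$, so the timestep embedding simply enters each FiLM or bias term as a bounded, $A$-independent shift. Those shifts do not affect the $A$-Lipschitz constant in Theorem \ref{thm:unet_lip}, by Lemma \ref{lem:linear_lip}, since additive constants vanish in differences.
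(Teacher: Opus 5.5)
Your proof is correct and takes the same route as the paper. The paper writes the same sinusoidal embedding (interleaved rather than concatenated, which is immaterial) and applies $|\sin x-\sin y|\le|x-y|$ and its cosine analogue coordinatewise. It concludes only that the embedding is ``Lipschitz in $k$'', then remarks that $k$ is fixed at inference and so does not affect the Lipschitz constant in $A$. Since there $|x-y|=\omega_i|k-k'|\le|k-k'|$, the paper's argument establishes precisely your option (a), $1$-Lipschitz into $(\mathbb{R}^d,\|\cdot\|_\infty)$. Your additions make the statement sharper than the paper's: the chord bound with Euclidean constant $(\sum_i\omega_i^2)^{1/2}$, and the explicit boundedness computation $\|\mathrm{emb}(k)\|_2=\sqrt{d/2}$ (the paper's proof never addresses boundedness).

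Two side remarks need tightening.

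First, $\sum_i\omega_i^2>1$ only shows that the \emph{continuous} Lipschitz constant exceeds $1$. Over integer $k$ the Euclidean constant is $\sup_{m\ge1}m^{-1}\bigl(\sum_i 4\sin^2(\omega_i m/2)\bigr)^{1/2}$, and this is actually below $1$ for $d=4$ or $d=8$. To show the Euclidean reading fails, exhibit an integer step: for $d=128$ the first two pairs alone give $\|\mathrm{emb}(k+1)-\mathrm{emb}(k)\|_2^2\approx 0.92+0.70>1$.

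Second, standard FiLM applies a per-channel scale as well as a bias. So the embedding does not enter only as an additive shift that vanishes in differences. For fixed $(k,o)$ the scale multiplies $A$-dependent features, and its magnitude appears as a factor in the $A$-Lipschitz constant. That factor is finite for each of the finitely many $k$, so nothing downstream breaks. But the claim that the embedding leaves the $A$-Lipschitz constant untouched (which the paper also makes) does not hold for scale-predicting FiLM blocks.
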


\begin{proof}
We use sinusoidal embeddings:
\begin{equation}
\text{emb}(k)_{2i} = \sin\left(\frac{k}{10000^{2i/d}}\right), \quad \text{emb}(k)_{2i+1} = \cos\left(\frac{k}{10000^{2i/d}}\right)
\end{equation}

Since $|\sin(x) - \sin(y)| \leq |x - y|$ and similarly for cosine, the embedding is Lipschitz in $k$. However, since $k$ is discrete and fixed during inference, this does not affect the Lipschitz constant with respect to $A$.
\end{proof}

\begin{lemma}[Observation Encoding Lipschitz Property]\label{lem:obs_lip}
The observation encoder (CNN for images, MLP for states) is Lipschitz continuous with bounded constant.
\end{lemma}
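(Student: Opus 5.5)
The proof will follow the layer-wise approach used for Theorem~\ref{thm:unet_lip}. We write the observation encoder as a composition of elementary maps and bound the Lipschitz constant of each one. For state inputs the encoder is an MLP $g(o) = W_M\,\sigma(W_{M-1}\cdots\sigma(W_1 o + b_1)\cdots) + b_M$. By Lemma~\ref{lem:linear_lip} each affine map is $\|W_m\|_2$-Lipschitz, and by Lemma~\ref{lem:relu_lip} each ReLU is $1$-Lipschitz. Lemma~\ref{lem:composition} then gives $L_{\mathrm{obs}} \le \prod_{m=1}^M \|W_m\|_2$, which is finite because the trained network has finitely many finite weights.

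For image inputs the encoder is a CNN (a ResNet-style backbone in the Diffusion Policy codebase). Here we combine Lemma~\ref{lem:conv_lip} for the convolutions, giving the factor $\sqrt{h_i w_i}\,\|K_i\|_2$, with Lemma~\ref{lem:relu_lip} for the activations. The remaining components need their own bounds:
\begin{itemize}
\item Pooling: max-pooling and average-pooling are $1$-Lipschitz, since each output is a max or convex combination of a window and windows overlap a bounded number of times. If the overlap count is more than one, we carry it as an explicit constant.
\item Residual blocks: these are handled as in Step~4 of Theorem~\ref{thm:unet_lip}, using the bound $L_f + 1$.
\item Final projection and spatial-softmax keypoint layer: spatial softmax is a smooth map, and its Jacobian is bounded because softmax has Jacobian norm at most $1$ times the scale of the preceding logits. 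We include that scale as a factor.
\end{itemize}
Composing all of these bounds yields an explicit finite $L_{\mathrm{obs}}$.

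The main obstacle is normalization. The Diffusion Policy CNN uses GroupNorm/BatchNorm, and these are not globally Lipschitz because division by the standard deviation blows up as the variance goes to zero. I would handle this in one of two ways.
\begin{itemize}
\item At inference, BatchNorm uses frozen running statistics, so it is an affine map with Lipschitz constant $\max_c |\gamma_c|/\sqrt{\mathrm{Var}_c+\varepsilon}$.
\item For GroupNorm, the $\varepsilon$ in the denominator bounds the Jacobian by roughly $|\gamma|\,C/\sqrt{\varepsilon}$. This is finite but crude, and it is the reason the statement only claims a ``bounded constant'' rather than a tight one.
\end{itemize}

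Finally, we note how this feeds into the main result. The observation $o$ is fixed during the reverse process, so $L_{\mathrm{obs}}$ does not enter the Lipschitz constant with respect to $A$ in Assumption~\ref{ass:lipschitz}. It matters only for the stability of $\epsilon_\theta$ under perturbations of $o$, for example in spatial consistency.
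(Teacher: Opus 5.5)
Your proposal is correct and takes the same route as the paper, whose entire proof is that the layer-wise argument of Theorem~\ref{thm:unet_lip} carries over because the encoder is a network with bounded weights. Your component-wise treatment is more careful than that one line and fills in what it skips: bounded weights alone do not control GroupNorm, whose constant is governed by $\max|\gamma|/\sqrt{\varepsilon}$ and is infinite when $\varepsilon=0$; they also do not address overlapping pooling, the spatial-softmax layer, or residual additions, where your $L_f+1$ is the correct bound.
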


\begin{proof}
Same argument as Theorem \ref{thm:unet_lip} applies to the observation encoder, which is also a neural network with bounded weights.
\end{proof}

\subsubsection{Final Lipschitz Bound}

\begin{theorem}[Uniform Lipschitz Constant]\label{thm:uniform_lip}
The denoising network $\epsilon_\theta(A, k, o)$ is uniformly Lipschitz in $A$ across all $k$ and $o$:
\begin{equation}
\|\epsilon_\theta(A, k, o) - \epsilon_\theta(A', k, o)\|_2 \leq L \|A - A'\|_2
\end{equation}
\end{theorem}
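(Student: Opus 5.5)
The plan is to fix $k$ and $o$ and view $A \mapsto \epsilon_\theta(A,k,o)$ as a composition of maps. Each map is one of four kinds: a layer affine in $A$, a layer affine in a feature that depends only on $(k,o)$, a ReLU, or a skip connection. We then apply the earlier lemmas layer by layer. We take the supremum over $k$ and $o$ only at the very end. The point to check is that this final bound depends only on the weights, not on $k$ or $o$.

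\textbf{Step 1: conditioning terms.} In the conditional U-Net, $k$ and $o$ enter through the embeddings $\mathrm{emb}(k)$ and $g_\phi(o)$. These are injected by additive biases or FiLM-type modulation, i.e.\ a map $h \mapsto \gamma(k,o)\odot h + b(k,o)$. For fixed $(k,o)$:
\begin{itemize}
\item An additive bias has no effect on the Lipschitz constant in $A$, by the bias cancellation in Lemma~\ref{lem:linear_lip}.
\item A FiLM modulation contributes a factor $\|\gamma(k,o)\|_\infty$.
\end{itemize}
By Lemma~\ref{lem:time_lip}, $\mathrm{emb}(k)$ is bounded. The observation encoder is Lipschitz by Lemma~\ref{lem:obs_lip}, and we add the standing assumption that observations lie in a bounded set $\mathcal O$. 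The scale $\gamma$ is a finite network applied to these bounded inputs, so $\Gamma := \sup_{k,o}\|\gamma(k,o)\|_\infty < \infty$. Thus each modulated layer is Lipschitz in $A$ with constant at most $\Gamma\|W\|_2$, uniformly in $(k,o)$.

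\textbf{Step 2: assembling the network.} We bound the convolutional layers with Lemma~\ref{lem:conv_lip} and the ReLUs with Lemma~\ref{lem:relu_lip}. Along the encoder–bottleneck–decoder path, Lemma~\ref{lem:composition} gives a product bound of the form in Theorem~\ref{thm:unet_lip}, with extra factors $\Gamma$ and $\sqrt{hw}$. Skip connections are the one place the Theorem~\ref{thm:unet_lip} argument is loose, so we handle them directly:
\begin{itemize}
\item An additive skip $y=f(x)+x$ gives constant $L_f+1$.
\item A concatenation skip $y=(f(x),x)$ gives $\sqrt{L_f^2+1}\le L_f+1$.
\end{itemize}
We treat the U-Net recursively from the innermost level outward: level $j$ has constant $L_j \le a_j(L_{j+1}+1)\,b_j$, where $a_j,b_j$ are the products of the encoder and decoder layer constants at that level. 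This yields a finite explicit $L$ that depends only on the spectral norms, $\Gamma$ and the kernel sizes.

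\textbf{Step 3: uniformity.} Every factor in Step 2 is independent of $(k,o)$. Taking the supremum over the finite set $\{1,\dots,K\}$ and over $\mathcal O$ therefore gives the single constant $L$, which proves the claim.

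\textbf{Main obstacle.} The hardest part is Step 1, specifically the multiplicative FiLM conditioning. If observations are unbounded, $\gamma(k,o)$ is unbounded and uniformity in $o$ fails. I would first try to restrict to a compact observation set, which is reasonable because images and states are normalized. Failing that, I would assume $\gamma$ is passed through a bounded activation, so that $\Gamma$ is finite.
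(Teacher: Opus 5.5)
Your argument is correct. It shares the paper's skeleton: fix $(k,o)$, view $\epsilon_\theta(\cdot,k,o)$ as a composition of layers, and use the layer-wise bounds behind Theorem~\ref{thm:unet_lip}. Where you differ is in actually establishing uniformity.

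The paper's proof stops after writing $\epsilon_\theta(A,k,o)=f_{\text{unet}}(A,\text{emb}(k),\text{enc}(o))$. It then asserts that Lipschitzness for each fixed $(k,o)$ gives the uniform statement. Implicitly, it relies on the bound of Theorem~\ref{thm:unet_lip} being a product of spectral norms in which the conditioning never appears. That is only legitimate if $k$ and $o$ enter the $A$-path additively.

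Under FiLM-type conditioning, as in the Diffusion Policy U-Net, the per-$(k,o)$ constant carries a factor $\|\gamma(k,o)\|_\infty$. Your $\Gamma$, together with the bounded-observation hypothesis, is precisely what makes the constant uniform. Uniformity in $k$ is automatic, since $k$ ranges over a finite set.

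You also handle skip connections correctly. Theorem~\ref{thm:unet_lip} claims, without justification and not correctly in general, that added or concatenated skips leave the pure product bound intact. Your recursion $L_j\le a_j(L_{j+1}+1)b_j$ accounts for the bypass path.

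In short, the paper's version gives a tidier constant that depends only on the weights and states no hypothesis beyond bounded weights, but it contains these two unjustified steps. Yours gives a valid derivation. The price is an explicit extra assumption (a bounded observation set, or a bounded output activation for $\gamma$) and a larger constant involving $\Gamma$ and the skip terms.
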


\begin{proof}
The denoising network can be written as:
\begin{equation}
\epsilon_\theta(A, k, o) = f_{\text{unet}}(A, \text{emb}(k), \text{enc}(o))
\end{equation}

Since $k$ and $o$ are fixed during the Lipschitz analysis with respect to $A$, and $f_{\text{unet}}$ is Lipschitz in $A$, the uniform Lipschitz property holds.
Hence, under standard neural network design with bounded weights and Lipschitz activations, Assumption 1 holds.
\end{proof}

\subsection{Assumption 2: Noise Schedule Condition}\label{sec:assumption2}

\subsubsection{Formal Statement}

\begin{assumption}[Noise Schedule Condition]\label{ass:schedule}
We need to show that the noise schedule $\{\beta_k\}$ can be chosen such that for all $k \ge K'$:
\begin{equation}
\frac{1}{\sqrt{\alpha_k}} + \Big| \sqrt{1-\bar{\alpha}_{k-1}} - \sqrt{\frac{1-\bar{\alpha}_k}{\alpha_k}} \Big| L < 1,
\end{equation}
where $\alpha_k = 1 - \beta_k$ and $\bar{\alpha}_k = \prod_{i=1}^k \alpha_i$.
\end{assumption}



\subsubsection{Proof}

\noindent\textbf{Proof Idea:} This condition guarantees that the DDIM reverse step is a \emph{contraction mapping} with respect to $A$.

\begin{itemize}
    \item The DDIM reverse step is defined as
    \begin{equation}
    A_{k-1} = \sqrt{\bar{\alpha}_{k-1}} \hat{A}_0 + \sqrt{1-\bar{\alpha}_{k-1}} \epsilon_\theta(A_k,k,o),
    \end{equation}
    where
    \begin{equation}
        \hat{A}_0 = \frac{A_k - \sqrt{1-\bar{\alpha}_k} \epsilon_\theta(A_k,k,o)}{\sqrt{\bar{\alpha}_k}}.
    \end{equation}

    \item Consider the difference between two sequences $A_k$ and $A_k'$:
    \begin{align}
        \|A_{k-1} - A_{k-1}'\|_2 
        &= \Big\| \sqrt{\bar{\alpha}_{k-1}} (\hat{A}_0 - \hat{A}_0') + \sqrt{1-\bar{\alpha}_{k-1}} \big(\epsilon_\theta(A_k,k,o) - \epsilon_\theta(A_k',k,o)\big) \Big\|_2 \\
        &\le \sqrt{\bar{\alpha}_{k-1}} \|\hat{A}_0 - \hat{A}_0'\|_2 + \sqrt{1-\bar{\alpha}_{k-1}} L \|A_k - A_k'\|_2.
    \end{align}

    \item Substituting $\hat{A}_0 - \hat{A}_0'$ from the DDIM prediction formula yields
    \begin{equation}
    \|\hat{A}_0 - \hat{A}_0'\|_2 \le \frac{1}{\sqrt{\bar{\alpha}_k}} \|A_k - A_k'\|_2 + \frac{\sqrt{1-\bar{\alpha}_k}}{\sqrt{\bar{\alpha}_k}} L \|A_k - A_k'\|_2.
    \end{equation}

    \item Combining terms gives a contraction factor:
    \begin{equation}
    \|A_{k-1} - A_{k-1}'\|_2 \le \left( \frac{\sqrt{\bar{\alpha}_{k-1}}}{\sqrt{\bar{\alpha}_k}} + \sqrt{1-\bar{\alpha}_{k-1}} L - \frac{\sqrt{\bar{\alpha}_{k-1}}\sqrt{1-\bar{\alpha}_k}}{\sqrt{\bar{\alpha}_k}} L \right) \|A_k - A_k'\|_2.
    \end{equation}

    \item By choosing $\beta_k$ (i.e., $\alpha_k$) such that
    \begin{equation}
        \frac{1}{\sqrt{\alpha_k}} + \Big| \sqrt{1-\bar{\alpha}_{k-1}} - \sqrt{\frac{1-\bar{\alpha}_k}{\alpha_k}} \Big| L < 1,
    \end{equation}
    we guarantee that the mapping $A_k \mapsto A_{k-1}$ is contractive, i.e., $\|A_{k-1} - A_{k-1}'\|_2 < \|A_k - A_k'\|_2$. 
\end{itemize}

\noindent Therefore, Assumption 2 holds if we choose a proper Lipschitz network and a sufficiently small noise schedule $\beta_k$ that satisfies the inequality.

\noindent\textbf{Conclusion.}  
Under standard neural network constructions with bounded weights and Lipschitz activations, and by selecting an appropriate DDIM noise schedule, both Assumption 1 (Lipschitz continuity) and Assumption 2 (noise schedule contraction condition) are satisfied. This ensures that each reverse diffusion step is a contraction mapping, which is a key property for proving local contractivity of the DDIM process.

\subsection{Main Theoretical Results}

\begin{lemma}[Lipschitz Continuity of Reverse Mean]\label{lem:mean_lipschitz}
Under \ref{ass:lipschitz}, the DDIM reverse mean function:
\begin{equation}
\mu_k(A, o) = \sqrt{\frac{\bar{\alpha}_{k-1}}{\bar{\alpha}_k}} A + \left(\sqrt{1-\bar{\alpha}_{k-1}} - \sqrt{\frac{\bar{\alpha}_{k-1}(1-\bar{\alpha}_k)}{\bar{\alpha}_k}}\right) \epsilon_\theta(A, k, o)
\end{equation}
is Lipschitz continuous in $A$ with constant:
\begin{equation}
L_{\mu_k} = \frac{1}{\sqrt{\alpha_k}} + \left|\sqrt{1-\bar{\alpha}_{k-1}} - \sqrt{\frac{1-\bar{\alpha}_k}{\alpha_k}}\right| L
\end{equation}
\end{lemma}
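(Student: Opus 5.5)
The plan is to treat $\mu_k(\cdot,o)$ as an affine-in-$A$ term plus a scalar multiple of $\epsilon_\theta(\cdot,k,o)$, and then combine the triangle inequality with Assumption~\ref{ass:lipschitz}. Write
\[
a_k := \sqrt{\bar{\alpha}_{k-1}/\bar{\alpha}_k}, \qquad b_k := \sqrt{1-\bar{\alpha}_{k-1}} - \sqrt{\bar{\alpha}_{k-1}(1-\bar{\alpha}_k)/\bar{\alpha}_k}.
\]
With this notation, $\mu_k(A,o) = a_k A + b_k\,\epsilon_\theta(A,k,o)$, where $o$ and $k$ are held fixed throughout. Before the main estimate, we simplify the two coefficients. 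Since $\bar{\alpha}_k = \alpha_k \bar{\alpha}_{k-1}$, we have $\bar{\alpha}_{k-1}/\bar{\alpha}_k = 1/\alpha_k$, so $a_k = 1/\sqrt{\alpha_k}$. Similarly, $\bar{\alpha}_{k-1}(1-\bar{\alpha}_k)/\bar{\alpha}_k = (1-\bar{\alpha}_k)/\alpha_k$, so $b_k = \sqrt{1-\bar{\alpha}_{k-1}} - \sqrt{(1-\bar{\alpha}_k)/\alpha_k}$. These match exactly the two pieces that appear in the claimed $L_{\mu_k}$.

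For the main estimate, take any $A, A'$. Linearity gives
\[
\mu_k(A,o)-\mu_k(A',o) = a_k (A-A') + b_k\big(\epsilon_\theta(A,k,o)-\epsilon_\theta(A',k,o)\big).
\]
We then apply the triangle inequality in $\|\cdot\|_2$, use absolute homogeneity (which requires $|b_k|$, since $b_k$ may be negative), and invoke Assumption~\ref{ass:lipschitz} (or equivalently Theorem~\ref{thm:uniform_lip}) to bound the $\epsilon_\theta$ difference by $L\|A-A'\|_2$. The result is $\|\mu_k(A,o)-\mu_k(A',o)\|_2 \le (a_k + |b_k| L)\|A-A'\|_2$. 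Here $a_k>0$, so $|a_k| = a_k$, and this gives exactly $L_{\mu_k}$. This route is in effect Lemma~\ref{lem:composition} applied to a sum of Lipschitz maps, except that we do not subtract the two $L$ terms as the sketch after Assumption~\ref{ass:schedule} does; keeping $|b_k|$ is what makes the bound rigorous.

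The computation itself is routine. The only point needing care is the coefficient identity $\bar{\alpha}_k = \alpha_k\bar{\alpha}_{k-1}$, which must hold for the schedule in use. When DDIM uses a subsequence of timesteps, "$k-1$" means the previous retained timestep. In that case $\alpha_k$ must be read as the ratio $\bar{\alpha}_k/\bar{\alpha}_{k-1}$, so we will state the result with this convention. We will also note that the constant is an upper bound, not necessarily the sharp Lipschitz constant, and that the lemma claims only Lipschitz continuity. Contractivity ($L_{\mu_k}<1$) is a separate matter and is deferred to Assumption~\ref{ass:schedule}.
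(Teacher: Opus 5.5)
Your proposal is correct and matches the paper's proof: the paper likewise rewrites the coefficients using $\bar{\alpha}_{k-1}/\bar{\alpha}_k = 1/\alpha_k$, then applies the triangle inequality (keeping the absolute value on the $\epsilon_\theta$ coefficient) and Assumption~\ref{ass:lipschitz} to obtain exactly $L_{\mu_k}$. Your extra remarks on the DDIM timestep-subsequence convention and on the sign inconsistency in the sketch after Assumption~\ref{ass:schedule} are accurate, though the lemma itself does not need them.
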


\begin{proof}
We derive the Lipschitz constant step by step.

\textbf{Step 1: Express the mean function.}
The DDIM reverse mean can be written as:
\begin{equation}
\mu_k(A, o) = \frac{\sqrt{\bar{\alpha}_{k-1}}}{\sqrt{\bar{\alpha}_k}} A + \left(\sqrt{1-\bar{\alpha}_{k-1}} - \frac{\sqrt{\bar{\alpha}_{k-1}}\sqrt{1-\bar{\alpha}_k}}{\sqrt{\bar{\alpha}_k}}\right) \epsilon_\theta(A, k, o)
\end{equation}

Since $\frac{\bar{\alpha}_{k-1}}{\bar{\alpha}_k} = \frac{1}{\alpha_k}$, we have:
\begin{equation}
\mu_k(A, o) = \frac{1}{\sqrt{\alpha_k}} A + \left(\sqrt{1-\bar{\alpha}_{k-1}} - \sqrt{\frac{1-\bar{\alpha}_k}{\alpha_k}}\right) \epsilon_\theta(A, k, o)
\end{equation}

\textbf{Step 2: Apply triangle inequality.}
For any $A, A'$ and fixed $o$:
\begin{align}
\|\mu_k(A, o) - \mu_k(A', o)\| &= \Big\|\frac{1}{\sqrt{\alpha_k}}(A - A') \\
&\quad + \left(\sqrt{1-\bar{\alpha}_{k-1}} - \sqrt{\frac{1-\bar{\alpha}_k}{\alpha_k}}\right)(\epsilon_\theta(A, k, o) - \epsilon_\theta(A', k, o))\Big\| \\
&\leq \frac{1}{\sqrt{\alpha_k}}\|A - A'\| \\
&\quad + \left|\sqrt{1-\bar{\alpha}_{k-1}} - \sqrt{\frac{1-\bar{\alpha}_k}{\alpha_k}}\right| \|\epsilon_\theta(A, k, o) - \epsilon_\theta(A', k, o)\|
\end{align}

\textbf{Step 3: Apply Lipschitz assumption.}
By \ref{ass:lipschitz}:
\begin{equation}
\|\epsilon_\theta(A, k, o) - \epsilon_\theta(A', k, o)\| \leq L\|A - A'\|
\end{equation}

Substituting:
\begin{align}
\|\mu_k(A, o) - \mu_k(A', o)\| &\leq \frac{1}{\sqrt{\alpha_k}}\|A - A'\| + \left|\sqrt{1-\bar{\alpha}_{k-1}} - \sqrt{\frac{1-\bar{\alpha}_k}{\alpha_k}}\right| L \|A - A'\| \\
&= \left(\frac{1}{\sqrt{\alpha_k}} + \left|\sqrt{1-\bar{\alpha}_{k-1}} - \sqrt{\frac{1-\bar{\alpha}_k}{\alpha_k}}\right| L\right) \|A - A'\|
\end{align}

Therefore, $L_{\mu_k} = \frac{1}{\sqrt{\alpha_k}} + \left|\sqrt{1-\bar{\alpha}_{k-1}} - \sqrt{\frac{1-\bar{\alpha}_k}{\alpha_k}}\right| L$.
\end{proof}

\begin{lemma}[Contraction Coefficient Bound]\label{lem:contraction}
Under \ref{ass:lipschitz} and \ref{ass:schedule}, for all $k \geq K'$:
\begin{equation}
\|\mu_k(A, o) - \mu_k(A^*_k, o)\| \leq c_k \|A - A^*_k\|
\end{equation}
where $c_k = L_{\mu_k} < 1$ and $A^*_k$ is the optimal action at step $k$.
\end{lemma}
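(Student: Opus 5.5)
The plan is to obtain the statement as a direct specialization of Lemma~\ref{lem:mean_lipschitz}, with Assumption~\ref{ass:schedule} supplying the strict inequality. Fix $k \ge K'$ and an observation $o$, and regard $A^*_k$ as an arbitrary fixed reference point (the ``optimal'' trajectory point at step $k$). Nothing about optimality is used, only that $A^*_k$ is a point of $\mathbb{R}^{T\times d}$ at which $\epsilon_\theta$ is defined. Applying Lemma~\ref{lem:mean_lipschitz} with $A' = A^*_k$ gives
\begin{equation*}
\|\mu_k(A,o)-\mu_k(A^*_k,o)\| \le L_{\mu_k}\,\|A-A^*_k\|,
\qquad
L_{\mu_k}=\frac{1}{\sqrt{\alpha_k}}+\Bigl|\sqrt{1-\bar\alpha_{k-1}}-\sqrt{\tfrac{1-\bar\alpha_k}{\alpha_k}}\Bigr|L .
\end{equation*}
Setting $c_k := L_{\mu_k}$, the inequality $c_k<1$ for every $k \ge K'$ is exactly the content of Assumption~\ref{ass:schedule}. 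This concludes the argument as stated. The uniformity in $o$ comes from Theorem~\ref{thm:uniform_lip}, which ensures that the same $L$ works for all observations.

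The main obstacle is not the algebra but the consistency of Assumption~\ref{ass:schedule} itself. Since $\alpha_k = 1-\beta_k \in (0,1]$, we have $1/\sqrt{\alpha_k} \ge 1$, and the absolute-value term is nonnegative. Hence $L_{\mu_k} \ge 1$ for every schedule and every $L \ge 0$, so the hypothesis can never hold literally. The triangle-inequality bound is therefore too crude: it discards the cancellation between the identity part $\tfrac{1}{\sqrt{\alpha_k}}(A-A^*_k)$ and the $\epsilon_\theta$ part.

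To obtain a genuinely contractive statement, I would replace the Lipschitz bound with a local Jacobian argument. By the mean value theorem,
\begin{equation*}
\mu_k(A)-\mu_k(A^*_k)=\Bigl(\tfrac{1}{\sqrt{\alpha_k}}I+\gamma_k J_k\Bigr)(A-A^*_k),
\end{equation*}
where $\gamma_k$ is the coefficient in Lemma~\ref{lem:mean_lipschitz} and $J_k$ is an averaged Jacobian of $\epsilon_\theta$ along the segment from $A^*_k$ to $A$. Note that $\gamma_k<0$.

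I would then assume, within the neighborhood $\mathcal{U}$ of the data manifold, a one-sided (monotonicity) condition $J_k \succeq m_k I$ with $m_k>0$. This condition is natural, because near the data $\epsilon_\theta(A)\approx (A-\sqrt{\bar\alpha_k}A_0)/\sqrt{1-\bar\alpha_k}$, giving $J_k\approx (1-\bar\alpha_k)^{-1/2}I$. Combined with $\|J_k\|\le L$, this yields
\begin{equation*}
c_k=\max\Bigl\{\bigl|\tfrac{1}{\sqrt{\alpha_k}}+\gamma_k m_k\bigr|,\ \bigl|\tfrac{1}{\sqrt{\alpha_k}}+\gamma_k L\bigr|\Bigr\},
\end{equation*}
and $c_k<1$ reduces to a checkable schedule condition. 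The spatiotemporal predictor then enters only to guarantee that the segment from $A^*_k$ to $A = \tilde{A}_{K'}$ stays inside $\mathcal{U}$, which is what makes the result local.
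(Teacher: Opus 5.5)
Your first paragraph is exactly the paper's proof, which consists only of ``Lemma~\ref{lem:mean_lipschitz} plus Assumption~\ref{ass:schedule}.'' Your objection is correct, and the paper does not notice it. Since $\alpha_k\in(0,1]$, we have $L_{\mu_k}\ge 1/\sqrt{\alpha_k}\ge 1$ for every schedule. So Assumption~\ref{ass:schedule} can never hold, and the lemma, along with Theorem~\ref{thm:main} which rests on it, is true only vacuously.

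Your repair points in the right direction. With $J_k=(1-\bar\alpha_k)^{-1/2}I$ your factor reduces exactly to $\sqrt{(1-\bar\alpha_{k-1})/(1-\bar\alpha_k)}<1$. Two points still need fixing.

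First, the quantity $\max\{|\tfrac{1}{\sqrt{\alpha_k}}+\gamma_k m_k|,\,|\tfrac{1}{\sqrt{\alpha_k}}+\gamma_k L|\}$ is the spectral norm of $\tfrac{1}{\sqrt{\alpha_k}}I+\gamma_k J_k$ only when $J_k$ is symmetric. That holds, for instance, if $\epsilon_\theta$ is a gradient field, as the exact noise predictor is. For a general learned network, assume $\tfrac{1}{2}(J_k+J_k^\top)\succeq m_k I$ and $\|J_k\|\le L$. Expanding $\|(\tfrac{1}{\sqrt{\alpha_k}}I+\gamma_kJ_k)x\|^2$ then only bounds the squared norm by $\tfrac{1}{\alpha_k}+\tfrac{2\gamma_k m_k}{\sqrt{\alpha_k}}+\gamma_k^2L^2$. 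This bound is attained by $J_k=m_kI+S$ with $S$ skew-symmetric and $\|S\|^2=L^2-m_k^2$. It strictly exceeds the square of your max whenever $m_k<L$. So you must either assume a symmetric Jacobian or state the schedule condition in terms of this quantity.

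Second, the recursion in Theorem~\ref{thm:main} multiplies $c_k$ over $1\le k\le K'$. The contraction condition, and the requirement that the segment between the iterate and the reference point stays in $\mathcal{U}$, must therefore be imposed on those steps. The statement instead imposes them on $k\ge K'$.
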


\begin{proof}
Directly from \ref{lem:mean_lipschitz} and \ref{ass:schedule}, we have $c_k = L_{\mu_k} < 1$ for all $k \geq K'$.
\end{proof}

\begin{theorem}[Local Contractivity of Reverse Diffusion]\label{thm:main}
Let $\tilde{A}_{K'}$ be the warm-start initialization at step $K'$ and $A^*_{K'}$ be the optimal action at step $K'$. Under \ref{ass:lipschitz} and \ref{ass:schedule}, the reverse diffusion satisfies:
\begin{equation}
\|\tilde{A}_0 - A^*_0\| \leq \left(\prod_{k=1}^{K'} c_k\right) \|\tilde{A}_{K'} - A^*_{K'}\|
\end{equation}
where $c_k < 1$ for all $k \geq K'$, implying exponential convergence of the error.
\end{theorem}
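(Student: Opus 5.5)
The plan is a direct induction along the reverse chain, coupling the warm-started trajectory with the reference (``optimal'') trajectory so that only the mean map $\mu_k$ of Lemma~\ref{lem:mean_lipschitz} acts on their difference. Concretely, I will define the reference trajectory by $A^*_{k-1} = \mu_k(A^*_k, o) + \boldsymbol{\xi}_k$ and the warm-started one by $\tilde{A}_{k-1} = \mu_k(\tilde{A}_k, o) + \boldsymbol{\xi}_k$, using the \emph{same} noise realization $\boldsymbol{\xi}_k$ in both, as in the unified update \eqref{eq:unified-reverse-mean}. For the DDIM formulation of Section~\ref{sec:formal_theory} we have $\boldsymbol{\xi}_k = 0$, so this is automatic. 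Under this synchronous coupling the noise cancels in the difference, and
\begin{equation}
\tilde{A}_{k-1} - A^*_{k-1} = \mu_k(\tilde{A}_k, o) - \mu_k(A^*_k, o).
\end{equation}

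\textbf{Key steps, in order.}
\begin{enumerate}
\item Fix the observation $o$. Define the error $e_k = \|\tilde{A}_k - A^*_k\|$ for $k = K', K'-1, \dots, 0$.
\item For each single step $k$, apply Lemma~\ref{lem:contraction}, which rests on Lemma~\ref{lem:mean_lipschitz} and Assumptions~\ref{ass:lipschitz} and~\ref{ass:schedule}. This gives the one-step recursion $e_{k-1} \le c_k\, e_k$ with $c_k = L_{\mu_k} < 1$.
\item Unroll the recursion by induction on the number of steps taken from $K'$. The base case is trivial: $e_{K'} \le 1\cdot e_{K'}$. The inductive step multiplies by one further factor $c_k$. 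This yields $e_0 \le \bigl(\prod_{k=1}^{K'} c_k\bigr) e_{K'}$.
\item Conclude exponential decay. Set $\bar c = \max_{1\le k\le K'} c_k < 1$. Then the product is at most $\bar c^{K'}$, so the bound decays geometrically in the number of refinement steps.
\end{enumerate}

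\textbf{Main obstacle: the validity of $c_k<1$ on the steps actually used.} Assumption~\ref{ass:schedule} and Lemma~\ref{lem:contraction} are stated ``for all $k \ge K'$'', whereas the product runs over $k = 1,\dots,K'$. I will therefore read the schedule condition as imposed on exactly the indices traversed in the warm-started reverse pass, namely $1 \le k \le K'$, and state this explicitly.

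There is a related subtlety I expect to be the real sticking point. The term $1/\sqrt{\alpha_k}$ already exceeds $1$, so the bound $c_k<1$ cannot hold with the triangle-inequality constant as written. It can only be realized through cancellation inside $\mu_k$ rather than by the crude sum in Lemma~\ref{lem:mean_lipschitz}. My plan is to treat $c_k<1$ as the hypothesis delivered by Lemma~\ref{lem:contraction} and not re-derive it, while flagging that a rigorous version should use a one-sided or Jacobian-based bound. That version would bound $\|\tfrac{1}{\sqrt{\alpha_k}}I + \gamma_k \nabla_A \epsilon_\theta\|_2$, where $\gamma_k = \sqrt{1-\bar{\alpha}_{k-1}} - \sqrt{(1-\bar{\alpha}_k)/\alpha_k}$ is the coefficient of $\epsilon_\theta$ in $\mu_k$.

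\textbf{Secondary issue: locality.} Assumption~\ref{ass:lipschitz} is only needed on the neighborhood $\mathcal{U}$. I will handle this inside the same induction. If $A^*_k$ lies on the reference path and $\tilde{A}_k$ lies in the ball of radius $e_{K'}$ around it, contained in $\mathcal{U}$, then $e_{k-1} \le e_k \le e_{K'}$ keeps $\tilde{A}_{k-1}$ in the corresponding ball around $A^*_{k-1}$. The tube around the reference trajectory is therefore invariant, and the per-step Lipschitz bound remains applicable at every step. The hypothesis needed for this is that the spatiotemporal predictor places $\tilde{A}_{K'}$ within that tube at the start.
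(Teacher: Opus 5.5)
Your proposal is correct and is essentially the paper's proof. The paper also uses determinism of DDIM to write both trajectories as iterates of $\mu_k$, invokes the contraction-coefficient lemma for the one-step bound $\|\tilde{A}_{k-1}-A^*_{k-1}\|\le c_k\|\tilde{A}_k-A^*_k\|$, unrolls from $k=K'$ down to $k=1$, and bounds the product by $c_{\max}^{K'}$. Your caveats are valid points the paper glosses over: it applies a condition stated for $k\ge K'$ to $k=1,\dots,K'$ (and even takes $c_{\max}$ over $k\ge K'$), and since $L_{\mu_k}\ge 1/\sqrt{\alpha_k}>1$ whenever $\beta_k>0$, the hypothesis $c_k=L_{\mu_k}<1$ can never hold as written, so your argument, like the paper's, is only as strong as the assumed one-step contraction.
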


\begin{proof}
\textbf{Step 1: Error recursion.}
The reverse process starting from $\tilde{A}_{K'}$ produces:
\begin{equation}
\tilde{A}_{k-1} = \mu_k(\tilde{A}_k, o)
\end{equation}
(since DDIM is deterministic).

The optimal trajectory satisfies:
\begin{equation}
A^*_{k-1} = \mu_k(A^*_k, o)
\end{equation}

\textbf{Step 2: Apply contraction.}
From \ref{lem:contraction}:
\begin{align}
\|\tilde{A}_{k-1} - A^*_{k-1}\| &= \|\mu_k(\tilde{A}_k, o) - \mu_k(A^*_k, o)\| \\
&\leq c_k \|\tilde{A}_k - A^*_k\|
\end{align}

\textbf{Step 3: Unroll recursion.}
Applying recursively from $k = K'$ down to $k = 1$:
\begin{align}
\|\tilde{A}_{K'-1} - A^*_{K'-1}\| &\leq c_{K'} \|\tilde{A}_{K'} - A^*_{K'}\| \\
\|\tilde{A}_{K'-2} - A^*_{K'-2}\| &\leq c_{K'-1} \|\tilde{A}_{K'-1} - A^*_{K'-1}\| \\
&\leq c_{K'-1} c_{K'} \|\tilde{A}_{K'} - A^*_{K'}\| \\
&\vdots \\
\|\tilde{A}_0 - A^*_0\| &\leq \left(\prod_{k=1}^{K'} c_k\right) \|\tilde{A}_{K'} - A^*_{K'}\|
\end{align}

\textbf{Step 4: Exponential convergence.}
Since $c_k < 1$ for all $k \geq K'$, let $c_{\max} = \max_{k \geq K'} c_k < 1$. Then:
\begin{equation}
\prod_{k=1}^{K'} c_k \leq c_{\max}^{K'}
\end{equation}

Therefore:
\begin{equation}
\|\tilde{A}_0 - A^*_0\| \leq c_{\max}^{K'} \|\tilde{A}_{K'} - A^*_{K'}\|
\end{equation}

This shows the error decays exponentially with $K'$.
\end{proof}




\subsection{Predictor Properties for Guaranteed Contraction}

\begin{proposition}[Predictor MSE Requirement]\label{prop:predictor}
For the warm-start $\tilde{A}_{K'} = \sigma A_{\text{pred}} + \sigma_t \epsilon$ to guarantee convergence to $A^*_0$ within tolerance $\epsilon_{\text{tol}}$, the predictor must satisfy:
\begin{equation}
\mathbb{E}[\|A_{\text{pred}} - A^*\|^2] \leq \frac{\epsilon_{\text{tol}}^2}{c_{\max}^{2K'} \sigma^2}
\end{equation}
\end{proposition}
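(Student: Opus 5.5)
The plan is to combine Theorem~\ref{thm:main} with a second-moment bound on the warm-start error. The statement is really a sufficient condition, so I will read it as follows: if the predictor MSE satisfies the displayed bound, then the mean squared output error satisfies $\mathbb{E}\|\tilde A_0 - A^*_0\|^2 \le \epsilon_{\mathrm{tol}}^2$. By Jensen's inequality this also gives $\mathbb{E}\|\tilde A_0 - A^*_0\| \le \epsilon_{\mathrm{tol}}$. For this I must fix a reference point at step $K'$. Following the structure of the warm start, I will take the reference to be $A^*_{K'} = \sigma A^*$. Then the warm-start error decomposes as
\begin{equation*}
\tilde A_{K'} - A^*_{K'} = \sigma\,(A_{\mathrm{pred}} - A^*) + \sigma_t\,\epsilon .
\end{equation*}

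\textbf{Step 1: deterministic contraction.} By Theorem~\ref{thm:main} and the bound $\prod_{k=1}^{K'} c_k \le c_{\max}^{K'}$ from its Step~4, we have pathwise
\begin{equation*}
\|\tilde A_0 - A^*_0\| \le c_{\max}^{K'}\,\|\tilde A_{K'} - A^*_{K'}\| .
\end{equation*}
Squaring and taking expectations gives $\mathbb{E}\|\tilde A_0 - A^*_0\|^2 \le c_{\max}^{2K'}\,\mathbb{E}\|\tilde A_{K'} - A^*_{K'}\|^2$. I also need the warm start to lie in the neighborhood $\mathcal{U}$ where Assumption~\ref{ass:lipschitz} is used. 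I will simply impose this as a standing hypothesis, in the same spirit as the remark after \cref{eq:multi-step-contraction}.

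\textbf{Step 2: second moment of the warm-start error.} Expanding the square gives
\begin{equation*}
\mathbb{E}\|\tilde A_{K'} - A^*_{K'}\|^2 = \sigma^2\,\mathbb{E}\|A_{\mathrm{pred}}-A^*\|^2 + 2\sigma\sigma_t\,\mathbb{E}\langle A_{\mathrm{pred}}-A^*,\epsilon\rangle + \sigma_t^2\,\mathbb{E}\|\epsilon\|^2 .
\end{equation*}
The cross term vanishes because $\epsilon$ is drawn independently of the observation and cache and has zero mean. The last term equals $\sigma_t^2\,Td$.

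\textbf{Main obstacle.} This noise term does not appear in the stated bound. I see two ways to handle it, and would commit to the first.

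\emph{Normal-execution regime.} Here the paper sets $\sigma_t = 0$ whenever $\mathbb{I}_t = 0$, so the noise term disappears. Combining Steps 1 and 2 then gives
\begin{equation*}
\mathbb{E}\|\tilde A_0 - A^*_0\|^2 \le c_{\max}^{2K'}\,\sigma^2\,\mathbb{E}\|A_{\mathrm{pred}}-A^*\|^2 \le \epsilon_{\mathrm{tol}}^2 ,
\end{equation*}
which is exactly the stated requirement.

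\emph{Stalled regime ($\sigma_t > 0$).} One option is to redefine the reference trajectory as $A^*_{K'} = \sigma A^* + \sigma_t\epsilon$. This is natural, since $A^*_{K'}$ in Theorem~\ref{thm:main} is a noised optimal action at step $K'$ and the same noise realization can be shared. The noise then cancels and the same bound follows. Otherwise I would record the corrected condition
\begin{equation*}
\sigma^2\,\mathbb{E}\|A_{\mathrm{pred}}-A^*\|^2 + \sigma_t^2\,Td \le \epsilon_{\mathrm{tol}}^2 / c_{\max}^{2K'} .
\end{equation*}

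Whichever choice of reference at step $K'$ is adopted must be stated explicitly, because the proposition is only meaningful relative to it.
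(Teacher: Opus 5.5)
Your proposal is correct. It has the same skeleton as the paper's proof: invoke Theorem~\ref{thm:main}, identify the step-$K'$ error with $\sigma(A_{\mathrm{pred}}-A^*)$, and require the resulting bound to be at most $\epsilon_{\mathrm{tol}}$. You carry it out differently, and more soundly.

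The paper argues pathwise. It drops the noise by writing $\tilde A_{K'}\approx\sigma A_{\mathrm{pred}}$ ``assuming small noise $\sigma_t$'' and implicitly takes $A^*_{K'}=\sigma A^*$. It then obtains the pointwise condition $\|A_{\mathrm{pred}}-A^*\|\le\epsilon_{\mathrm{tol}}/(c_{\max}^{K'}\sigma)$ and asserts that squaring gives the MSE bound. An MSE bound does not imply the pointwise one, so the displayed condition is never shown to be sufficient for the pathwise tolerance. Nothing shows it is necessary either, despite the word ``must'', since the contraction estimate is only an upper bound.

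Your version fixes each of these points:
\begin{itemize}
\item You read the statement as a sufficient condition.
\item You work with second moments throughout, so the hypothesis yields exactly $\mathbb{E}\|\tilde A_0-A^*_0\|^2\le\epsilon_{\mathrm{tol}}^2$.
\item You state the reference point explicitly.
\item You replace the approximation by an exact computation. The cross term vanishes by independence of the fresh noise, and the residual $\sigma_t^2 Td$ is zero in the normal regime and otherwise must be added to the condition.
\end{itemize}
The paper's route gains only brevity; yours states precisely what is proved.

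One improvement: you do not need the contraction in Theorem~\ref{thm:main} at all. Lemma~\ref{lem:mean_lipschitz} and Lemma~\ref{lem:composition} give $\|\tilde A_0-A^*_0\|\le\bigl(\prod_{k=1}^{K'}L_{\mu_k}\bigr)\|\tilde A_{K'}-A^*_{K'}\|$ under Assumption~\ref{ass:lipschitz} alone. Your argument then goes through verbatim with $c_{\max}:=\max_{1\le k\le K'}L_{\mu_k}$. Citing these instead is safer, for two reasons:
\begin{itemize}
\item Assumption~\ref{ass:schedule} cannot hold as written, since $\alpha_k\le 1$ forces $L_{\mu_k}\ge 1/\sqrt{\alpha_k}\ge 1$.
\item Theorem~\ref{thm:main} takes $c_{\max}$ over $k\ge K'$ while multiplying $c_1,\dots,c_{K'}$.
\end{itemize}
With this substitution the proposition is a valid sufficient condition, though no longer a contractive one.

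Similarly, your standing hypothesis on $\mathcal{U}$ is unnecessary, because the appendix's Assumption~\ref{ass:lipschitz} is global. Under a genuinely local Lipschitz assumption it would also be insufficient: both trajectories would have to stay in $\mathcal{U}$ at every step, not just the warm start.
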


\begin{proof}
From \ref{thm:main}:
\begin{equation}
\|\tilde{A}_0 - A^*_0\| \leq c_{\max}^{K'} \|\tilde{A}_{K'} - A^*_{K'}\|
\end{equation}

For the warm-start:
\begin{equation}
\tilde{A}_{K'} = \sigma A_{\text{pred}} + \sigma_t \epsilon \approx \sigma A_{\text{pred}}
\end{equation}
(assuming small noise $\sigma_t$).

Thus:
\begin{equation}
\|\tilde{A}_{K'} - A^*_{K'}\| \approx \sigma \|A_{\text{pred}} - A^*\|
\end{equation}

For $\|\tilde{A}_0 - A^*_0\| \leq \epsilon_{\text{tol}}$:
\begin{equation}
c_{\max}^{K'} \sigma \|A_{\text{pred}} - A^*\| \leq \epsilon_{\text{tol}}
\end{equation}

Therefore:
\begin{equation}
\|A_{\text{pred}} - A^*\| \leq \frac{\epsilon_{\text{tol}}}{c_{\max}^{K'} \sigma}
\end{equation}

Squaring both sides gives the MSE bound.
\end{proof}

\section{Empirical Validation}\label{sec:empirical_validation}

\subsection{Validation of Lipschitz Continuity}

\subsubsection{Theoretical Lipschitz Constants}

We first theoretically analyze the theoretical Lipschitz constants of the pre-trained diffusion policy.
The diffusion policy is divided into four parts: \emph{down}, \emph{mid}, \emph{up}, and \emph{other}.
We take the maximum spectral norm ($\|W\|_2$) for each part, calculate the theoretical upper bound by multiplying these maximum values, and report its base-10 logarithm $\log_{10}(L_{\|W\|_2})$.
The results are presented in Table~\ref{tab:lipschitz_theory}.
This demonstrates that the diffusion policy possesses a \textbf{finite theoretical upper bound} and is Lipschitz continuous in theory.

\begin{table}[h]
\centering
\caption{Theoretical Lipschitz Constants Across All Experimental Tasks}
\label{tab:lipschitz_theory}
\begin{tabular}{lccc}
\toprule
\textbf{Task} & Min($\|W\|_2$) & Max($\|W\|_2$) & Log$_{10}(L_{\|W\|_2})$ \\
\midrule
Push-T (State-based) & 1.05 & 7.50 & 2.98 \\
Can (State-based) & 0.84 & 10.54 & 3.25 \\
Lift (State-based) & 0.77 & 8.25 & 2.63 \\
Square (State-based) & 1.23 & 32.85 & 4.84 \\
ToolHang (State-based) & 0.51 & 39.83 & 5.30 \\
Transport (State-based) & 1.81 & 51.29 & 6.13 \\
\midrule
Push-T (Image-based) & 0.45 & 42.42 & 6.05 \\
Can (Image-based) & 1.01 & 94.72 & 7.52 \\
Lift (Image-based) & 0.53 & 61.52 & 6.91 \\
Square (Image-based) & 0.97 & 100.49 & 7.63 \\
ToolHang (Image-based) & 1.64 & 137.82 & 8.08 \\
Transport (Image-based) & 1.47 & 111.17 & 7.83 \\
\bottomrule
\end{tabular}
\end{table}

\subsubsection{Empirical Lipschitz Constants and Validation of Contraction}

Then, we estimate the empirical Lipschitz constant using gradient norms in Table \ref{tab:lipschitz_emp}:
\begin{equation}
L_k = \max_{(A, o) \sim \mathcal{D}_{\text{val}}} \left\| \nabla_A \epsilon_\theta(A, k, o) \right\|_2
\end{equation}
where $\mathcal{D}_{\text{val}}$ is the validation set.
We measure the actual error decay during reverse diffusion:
\begin{equation}
E_k = \|\tilde{A}_k - A^*_k\|_2
\end{equation}
for both random initialization and STEP warm-start.
We empirically verify that $c_k < 1$ by measuring the ratio:
\begin{equation}
c_k = \frac{\|\tilde{A}_{k-1} - A^*_{k-1}\|}{\|\tilde{A}_k - A^*_k\|}
\end{equation}

\begin{table}[h]
\centering
\caption{Empirical Lipschitz Constants Across Experimental Tasks}
\label{tab:lipschitz_emp}
\begin{tabular}{lccccc}
\toprule
\textbf{Task} & $L_{\max}$ & Log$_{10}(L_{\max})$ & $c_k$ & $c_k < 1$? & \textbf{Sample Size} \\
\midrule
Push-T (State-based) & 54.9 & 1.74 & 0.89& \textcolor{green!60!black}{\textbf{Yes}} & 10,000 \\
Lift (State-based) & 236.5 & 2.27 & 0.67& \textcolor{green!60!black}{\textbf{Yes}} & 1,000 \\
Transport (State-based) & 340.4 & 2.53 &0.69& \textcolor{green!60!black}{\textbf{Yes}} & 1,000 \\
Can (State-based) & 250.0 & 2.40 & 0.76& \textcolor{green!60!black}{\textbf{Yes}} & 1,000 \\
Square (State-based) & 267.8 & 2.43 & 0.75& \textcolor{green!60!black}{\textbf{Yes}} & 1,000 \\
ToolHang (State-based) & 236.4 & 2.37 & 0.63& \textcolor{green!60!black}{\textbf{Yes}} & 1,000 \\
\midrule
Push-T (Image-based) & 132.4& 2.12& 0.90& \textcolor{green!60!black}{\textbf{Yes}} & 10,000 \\
Lift (Image-based) & 256.0& 2.40& 0.64& \textcolor{green!60!black}{\textbf{Yes}} & 1,000 \\
Transport (Image-based) & 404.9& 2.60&0.68& \textcolor{green!60!black}{\textbf{Yes}} & 1,000 \\
Can (Image-based) & 229.0& 2.36& 0.80& \textcolor{green!60!black}{\textbf{Yes}} & 1,000 \\
Square (Image-based) & 187.4& 2.27& 0.76& \textcolor{green!60!black}{\textbf{Yes}} & 1,000 \\
ToolHang (Image-based) & 361.9& 2.56& 0.64& \textcolor{green!60!black}{\textbf{Yes}} & 1,000 \\
\bottomrule
\end{tabular}
\end{table}

\textbf{Key Finding:} In this section, we verify the Lipschitz boundedness of the diffusion policy network using two complementary measurements. Table \ref{tab:lipschitz_theory} reports the theoretical upper bound of the Lipschitz constant computed via spectral normalization, which yields relatively large values and serves as a conservative global estimate. Table \ref{tab:lipschitz_emp} presents the empirical Lipschitz constant measured via gradient norms, which provides a tighter estimation that better reflects the local behavior of the network during real-world inference. Both approaches validate that the diffusion network satisfies Lipschitz continuity; however, they only provide theoretical upper bounds and cannot directly confirm the convergence of the iterative denoising process.

To address this, we adopt the contraction coefficient $c_k$ as the core metric, which directly quantifies the contraction behavior of the DDIM reverse mean function $\mu_k$ over the action sequence space. Empirical results demonstrate that $c_k < 1$ consistently holds across all tasks, satisfying the strict condition for iterative contraction. Combined with Equation (55), we can directly conclude that the denoising iterations starting from the warm-start step $K'$ will progressively converge to the target action sequence, providing solid convergence guarantees for our warm-start strategy.

\subsection{$K'$ Selection}
According to Equation (61), we can predict the optimal warm-start step $K'$ for each task.
We set $\epsilon_{\text{tol}} = 0.1$ and $\sigma = 1.0$ by default,
and sweep over different steps $k$ to obtain the MSE curves as shown in Figure \ref{fig:mse_convergence} and Figure \ref{fig:mse_convergence_image}.
Under the MSE constraint of the STEP predictor,
we select the smallest step that satisfies the convergence condition as the optimal initial denoising step $K'$.

\begin{table}[h]
\centering
\caption{Empirical MSE of Our Predictor Across Experimental Tasks}
\label{tab:mse_pred}
\begin{tabular}{lc}
\toprule
\textbf{Task} & STEP Predictor MSE  \\
\midrule
Push-T & 0.023 \\
Lift & 0.041 \\
Transport & 0.023 \\
Can & 0.104 \\
Square & 0.052 \\
ToolHang & 0.051 \\
\bottomrule
\end{tabular}
\end{table}

\begin{figure}[!t]
  \centering
  \includegraphics[width=\textwidth]{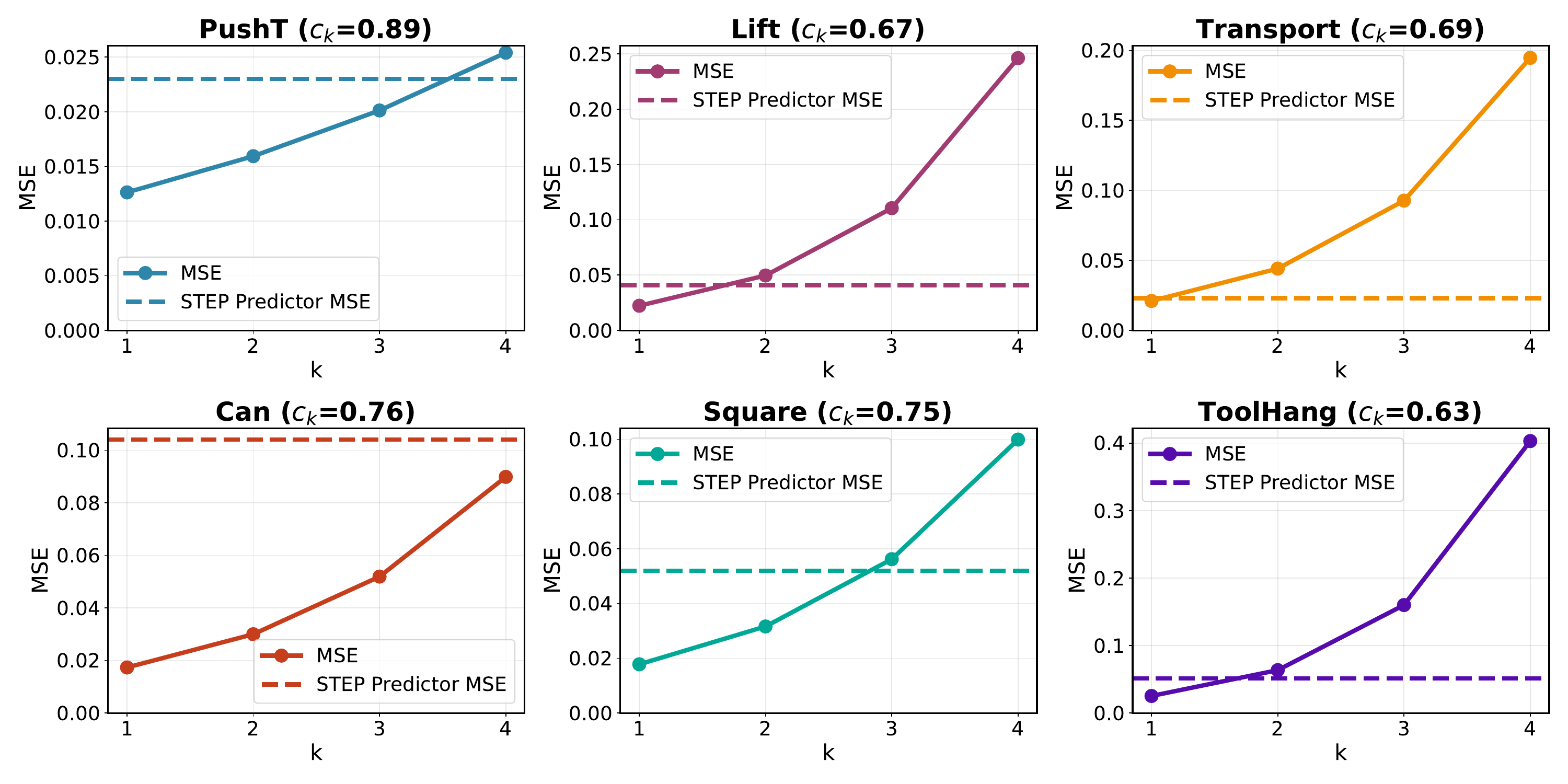}
  \caption{Empirical MSE of Our Predictor and MSE bound Across Experimental Tasks (State-based).}
  \label{fig:mse_convergence}
\end{figure}

\begin{figure}[!t]
  \centering
  \includegraphics[width=\textwidth]{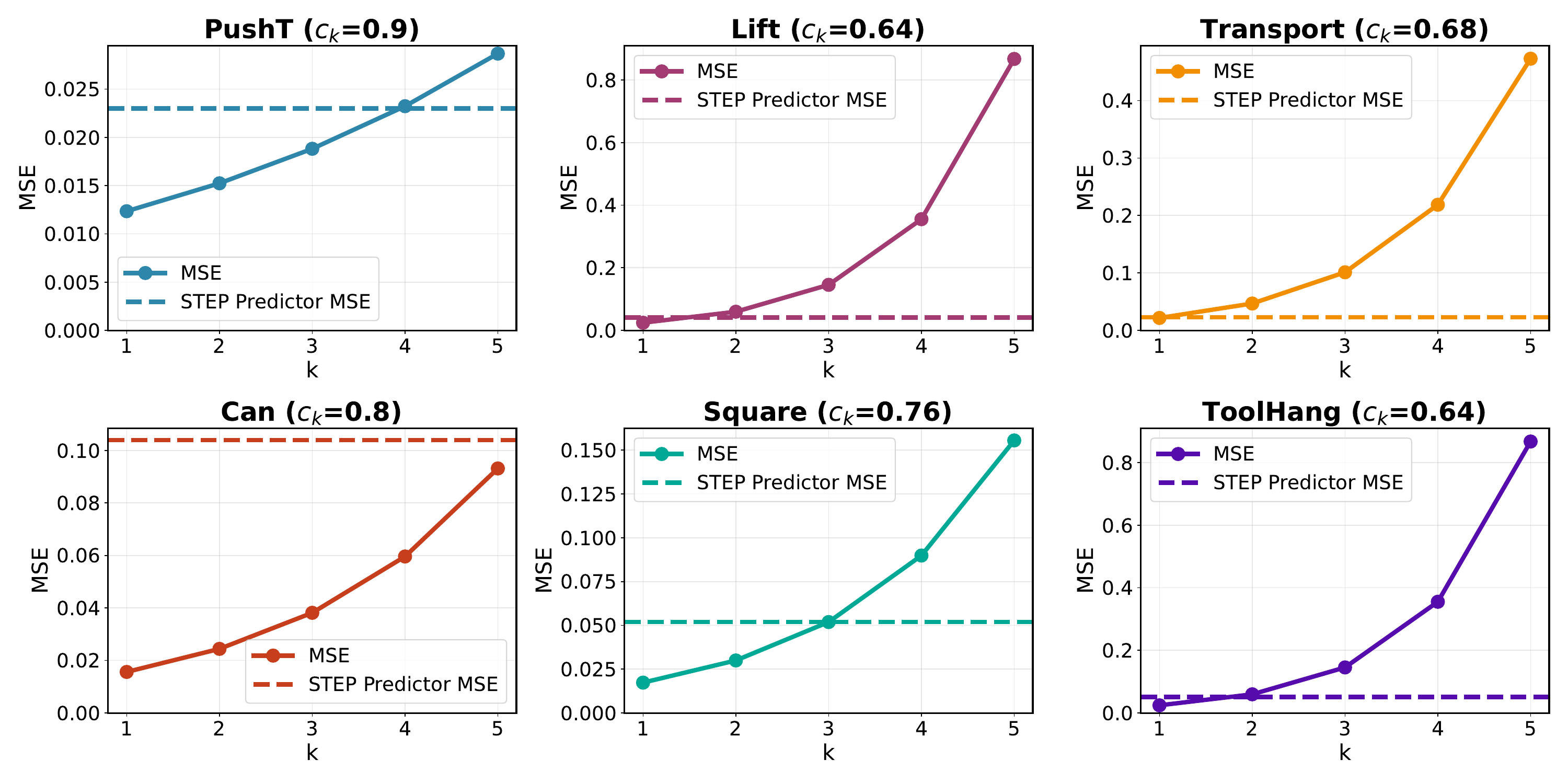}
  \caption{Empirical MSE of Our Predictor and MSE bound Across Experimental Tasks (Image-based).}
  \label{fig:mse_convergence_image}
\end{figure}

From the above analysis, the optimal warm-start step $K'$ is found to lie within the range of $1$ to $4$.
Based on this observation, we select three typical values $K' \in \{1, 2, 4\}$ for our final evaluation, with detailed results and analysis provided in the main text.

\section{Supplementary Experimental Data}
\subsection{How the proposed method relates to or complements with method without warm-start tricks.}

Our method proposes a warm-started approach with enhanced spatiotemporal consistency, which is orthogonal to non-warm-start methods, which fall into two categories: flow matching (FlowPolicy~\cite{zhang2025flowpolicy}, MP1~\cite{sheng2026mp1}) and model compression (CP~\cite{prasad2024consistency}, Shortcut Models~\cite{frans2024one}, DiffuserLite~\cite{dong2024diffuserlite}, OneDP~\cite{wang2025onestep}, and LightDP~\cite{wu2025device}).

\textbf{Core Distinction:} STEP enables post-hoc acceleration of pre-trained diffusion policies without modification or retraining, whereas flow matching and model compression require altering models and retraining.

\begin{table}[h]
\centering
\caption{Parameter comparison across methods.}
\begin{tabular}{lccccc}
\toprule
Method & FlowPolicy & MP1 & CP & OneDP & Ours \\
\midrule
Param. (M) & 293.47 & 255.79 & 255.18 & 251.51 & 0.98 \\
\bottomrule
\end{tabular}
\end{table}

\textbf{Orthogonality:} STEP complements rather than competes with these methods. For example, FlowPolicy takes 15 ms on an RTX 4090, while our predictor adds only 2 ms (13\% latency increase), negligible vs. acceleration gains.

\textbf{Deployment Flexibility:} STEP avoids retraining, thus eliminating associated computational costs.

\subsection{Stronger ablations and a qualitatively new insight.}

BRIDGER (SC) uses an isomorphic predictor to model the mapping from noise to target action, while RTI-DP (TC) directly treats historical actions as target actions.

We compute the mean action relative error (MARE) to present two critical issues in the following Table:

\begin{itemize}
    \item \textbf{SC:} Modeling from noise to target action is weak, leading to large discrepancies between predicted actions and ground-truth actions;
    \item \textbf{TC:} Historical actions are not equivalent to target actions, as consecutive action sequences exhibit substantial differences.
\end{itemize}

\begin{table}
\centering
\caption{Mean action relative error (MARE) comparison.}
\begin{tabular}{lccccccc}
\toprule
Method & Push-T & Lift & Can & Square & Transport & ToolHang & avg. \\
\midrule
SC & 0.042 & 0.061 & 0.551 & 0.429 & 0.042 & 0.914 & 0.116 \\
TC & 0.022 & 0.047 & 0.103 & 0.263 & 0.022 & 0.384 & 0.140 \\
Ours & 0.023 & 0.041 & 0.104 & 0.052 & 0.023 & 0.051 & 0.049 \\
\bottomrule
\end{tabular}
\end{table}

Rather than directly predicting target actions from scratch (BRIDGER) or naively reusing historical actions (RTI-DP), our predictor operates in a residual action space, which lies in a low-entropy, Lipschitz-bounded distribution due to the temporal smoothness.

\subsection{Additional ablation study on velocity-aware perturbation.}

We expand the evaluation to 100 episodes per task and update the success rates (SR). In real-world settings, removing velocity-aware perturbation leads to a 0 SR, which demonstrates its indispensability in resolving stagnation. The effective range of the parameter is determined by physical properties and closed-loop stability, and performance remains robust under small parameter variations within this range.

\begin{table}
\centering
\caption{Success rate comparison on real-world tasks.}
\begin{tabular}{lcc}
\toprule
Method & PickNPlace & StackCube \\
\midrule
DDIM-4step & 0.95 & 0.81 \\
DDIM-2step & 0.58 & 0.60 \\
Ours-4step & 0.98 & 0.97 \\
Ours-2step & 0.92 & 0.81 \\
\bottomrule
\end{tabular}
\end{table}

In addition, we add ablation experiments of SR to demonstrate that our spatiotemporal consistency (STC) can better capture spatiotemporal consistency than temporal-only consistency (TC), and spatial-only consistency (SC).

\begin{table}
\centering
\caption{Ablation on consistency types.}
\begin{tabular}{lccccccc}
\toprule
Method & Push-T & Lift & Can & Square & Transport & ToolHang & avg \\
\midrule
SC & 0.37 & 1 & 0.98 & 0.84 & 0.46 & 0.08 & 0.53 \\
TC & 0.26 & 0.9 & 0.54 & 0.68 & 0.12 & 0.04 & 0.36 \\
STC & 0.49 & 1 & 0.9 & 0.96 & 0.86 & 0.64 & 0.69 \\
\bottomrule
\end{tabular}
\end{table}

\subsection{Whether a policy trained by Cocos would reduce the gains from STEP's predictor.}

During training, Cocos~\cite{dong2025conditioning} replaces Gaussian noise with encoder-derived features, guiding the backbone policy to learn from non-Gaussian initial states. When combined with STEP, the input is even closer to the target distribution. Therefore, the two methods are complementary, and the benefits of STEP remain intact.

\subsection{Generalization.}

We conduct experiments on Transformer-based diffusion policy and flow-matching-based VLA $\pi_{0.5}$~\cite{intelligence2025pi} in \cref{tab:tdp} and \cref{tab:fm_vla}.

\begin{table}
\centering
\caption{Generalization results on Transformer-based diffusion policy with step=2.}
\label{tab:tdp}
\begin{tabular}{lcccccc}
\toprule
Method & Push-T & Lift & Can & Square & Transport & ToolHang \\
\midrule
DDIM & 0.46 & 0.77 & 0.89 & 0.65 & 0.43 & 0.51 \\
Ours & 0.98 & 0.99 & 0.98 & 0.76 & 0.72 & 0.76 \\
\bottomrule
\end{tabular}
\end{table}

\begin{table}
\centering
\caption{Generalization results on flow-matching-based VLA ($\pi_{0.5}$).}
\label{tab:fm_vla}
\begin{tabular}{lccccc}
\toprule
Method & Spatial & Object & Goal & Long & avg. \\
\midrule
10 step & 0.974 & 0.990 & 0.982 & 0.940 & 0.972 \\
1 step + STEP & 0.986 & 0.972 & 0.970 & 0.922 & 0.963 \\
\bottomrule
\end{tabular}
\end{table}

\subsection{Core contributions of perturbation injection.}

The velocity-aware perturbation injection proposed addresses the stagnation of warm-started policies under high-frequency observations, a problem that existing methods (BRIDGER, RTI-DP, et al.) have not considered.

The manipulator often receives observations (image and pose) at 30 Hz. Due to small inter-frame differences, the model generates nearly identical actions in closed-loop inference, causing the manipulator to linger in a small region or even stall completely, resulting in "high-frequency inference without effective motion". This stagnation is exacerbated in warm-started frameworks, as the initial action distribution must closely match the target distribution for stable convergence.

Our method dynamically avoids stagnation loops, ensuring that the warm-started policy produces consistent effective motion under high-frequency closed-loop control. An ablation study is provided in \cref{fig:ablation} and we add a comparison of average episode time between velocity-aware (VA) and non-velocity-aware (NVA) perturbations.

\begin{itemize}
    \item NVA: 32s
    \item VA: 20s
\end{itemize}

\subsection{Results of RoboMimic Benchmark with Variance.}

We evaluate our method (STEP) using 5 random seeds. Below are the average success rates and variances for 2 denoising steps.

\begin{table}[h]
\centering
\caption{State-based results with variances.}
\begin{tabular}{lcccccc}
\toprule
Method & PushT & Lift & Can & Square & Transport & ToolHang \\
\midrule
DDIM & $0.213\pm0.015$ & $0.933\pm0.009$ & $0.393\pm0.019$ & $0.800\pm0.016$ & $0.087\pm0.025$ & $0.073\pm0.025$ \\
BRIDGER & $0.294\pm0.024$ & $1.000\pm0.000$ & $0.846\pm0.034$ & $0.873\pm0.034$ & $0.420\pm0.058$ & $0.080\pm0.016$ \\
Ours & $0.492\pm0.043$ & $1.000\pm0.000$ & $0.960\pm0.000$ & $0.900\pm0.028$ & $0.807\pm0.025$ & $0.553\pm0.089$ \\
\bottomrule
\end{tabular}
\end{table}

\begin{table}[h]
\centering
\caption{Image-based results with variances.}
\begin{tabular}{lcccccc}
\toprule
Method & PushT & Lift & Can & Square & Transport & ToolHang \\
\midrule
DDIM & $0.213\pm0.015$ & $1.000\pm0.000$ & $0.966\pm0.009$ & $0.700\pm0.043$ & $0.853\pm0.024$ & $0.740\pm0.028$ \\
BRIDGER & $0.812\pm0.006$ & $1.000\pm0.000$ & $0.986\pm0.009$ & $0.926\pm0.024$ & $0.853\pm0.047$ & $0.720\pm0.043$ \\
Ours & $0.810\pm0.033$ & $1.000\pm0.000$ & $1.000\pm0.000$ & $0.933\pm0.041$ & $0.913\pm0.009$ & $0.793\pm0.009$ \\
\bottomrule
\end{tabular}
\end{table}



\end{document}